\title{Max-plus statistical leverage scores}
\author{James Hook}

\documentclass[12pt]{article}
\usepackage{eqnarray,amssymb}%,euscript}
\usepackage{amsmath,amsthm}
\usepackage{pgf}
\usepackage{tikz}
\usetikzlibrary{arrows,automata}
\usepackage[latin1]{inputenc}
\usepackage{graphicx}
\newtheorem{theorem}{Theorem}[section]
\newtheorem{lemma}[theorem]{Lemma}

\usepackage{float}
\usepackage{enumitem}
\usepackage{subfigure}
\usepackage{caption}
\usepackage{longtable}

\DeclareFontFamily{OT1}{pzc}{}
\DeclareFontShape{OT1}{pzc}{m}{it}{<-> s * [1.10] pzcmi7t}{}
\DeclareMathAlphabet{\mathpzc}{OT1}{pzc}{m}{it}

\usepackage{algorithm}
\usepackage{algpseudocode}

% declaration of the new block
%\algblock{ParFor}{EndParFor}
%% customising the new block
%\algnewcommand\algorithmicparfor{\textbf{parfor}}
%\algnewcommand\algorithmicpardo{\textbf{do}}
%\algnewcommand\algorithmicendparfor{\textbf{end\ parfor}}
%\algrenewtext{ParFor}[1]{\algorithmicparfor\ #1\ \algorithmicpardo}
%\algrenewtext{EndParFor}{\algorithmicendparfor}

%%%%%%%%%Macros%%%%%%%%%%%
\DeclareFontFamily{OT1}{pzc}{}
\DeclareFontShape{OT1}{pzc}{m}{it}{<-> s * [1.10] pzcmi7t}{}
\DeclareMathAlphabet{\mathpzc}{OT1}{pzc}{m}{it}
\def\lc#1{\mathpzc{#1}}
\def\uc#1{\mathcal{#1}}

\def\Rmax{\mathbb{R}_{\max}}
\def\R{\mathbb{R}}

\def\C{\mathbb{C}}

\def\P{\mathbb{C}\{\{z\}\}}
\def\Cnd{\mathbb{C}^{n\times d}}
\def\perm{\hbox{\normalfont perm}}
\def\oas{\hbox{\normalfont oas}}
\def\col{\hbox{\normalfont col}}

\def\uc{\mathcal}
\def\lc{\mathpzc}

\def\nbd{^{n\times d}}
\def\sgn{\hbox{sign}}

\newtheorem{heuristic}[theorem]{Heuristic}

\theoremstyle{definition}
\newtheorem{example}{Example}[section]

\usepackage[hmargin=2cm,vmargin=2cm]{geometry}

\begin{document}
\maketitle
\begin{abstract}
The statistical leverage scores of a complex matrix $A\in\Cnd$ record the degree of alignment between $\col(A)$ and the coordinate axes in $\C^n$.  These score are used in random sampling algorithms for solving certain numerical linear algebra problems. In this paper we present a max-plus algebraic analogue for statistical leverage scores. We show that max-plus statistical leverage scores can be used to calculate the exact asymptotic behavior of the conventional statistical leverage  scores of a generic matrices of Puiseux series and also provide a novel way to approximate the conventional statistical leverage scores of a fixed or complex matrix. The advantage of approximating a complex matrices scores with max-plus scores is that the max-plus scores can be computed very quickly. This approximation is typically accurate to within an order or magnitude and should be useful in practical problems where the true scores are known to vary widely. 

\end{abstract}

\section{Introduction}

Let $A\in\C^{n\times d}$ be a complex matrix. The \emph{statistical leverage scores} of $A$ are the vector $p(A)\in\R^n$, with
\begin{equation}\label{slsdef1}
p_{i}(A)=\left(\max_{x\in\C^n}\frac{|(Ax)_{i}|}{\|Ax\|_{2} }\right)^2, \quad \hbox{for $i=1,\dots,n$}.
\end{equation}
The $i$th statistical leverage score of $A$ is equal to the square of the cosine of the angle between $\col(A)$  and the unit vector $\underline{e}_{i}$. To calculate $p(A)$ we take a decomposition that provides an orthogonal basis for $\col(A)$. For example suppose that $A$ has rank $k$, then if we take the QR decomposition we obtain $A=QR$, with $Q\in\C^{n\times k}$ and
\begin{equation}\label{calcqr}
p_{i}(A)=\|Q_{i\cdot}\|_{2}^{2}, \quad \hbox{for $i=1,\dots,n$},
\end{equation}
where $Q_{i\cdot}$ denotes the $i$th row of $Q$. Note that $\sum_{i=1}^{n}p_{i}(A)/k=1$, so that the vector $p(A)/k\in\mathbb{R}^{n}$ is a probability distribution on $\{1,\dots,n\}$. 

Statistical leverage score distributions are used in random sampling algorithms for solving certain numerical linear algebra problems \cite{Woodruff2014,Ma2015,gittens2013}. For example: Algorithm~\ref{alg:SLS} approximates the least squares solution $x^{\ast}=\arg\min_{x\in\mathbb{C}^{d}}\| Ax-y\|_{2}$, by examining a randomly selected sample of the rows of $[A,y]$. The $r\times n$ random matrix $M$ samples $r$ rows from the least squares problem with replacement, according to the probability distribution $p$, the sampled rows are scaled by one over the square root of their sampling probability to ensure that the approximate solution is unbiased. We then compute the solution $\hat{x}$ that is optimal for the sampled rows and then use it as an approximate solution for the full problem.  This is similar to taking a poll to predict an election result and just like taking a poll, it is crucial that our sample set reflects the statistical properties of the full set of rows. Theorem~\ref{least2} shows that sampling with respect to statistical leverage scores is one way of achieving this. The full result presented in \cite{Drineas2006} also shows how approximate statistical leverage scores can also be used for sampling. Note that Theorem~\ref{least2} holds for an arbitrary matrix $A\in\mathbb{C}^{n\times d}$, in particular there is no assumed statistical model for the rows of $A$.

\begin{algorithm}

\caption{ \label{alg:SLS}
Given $A\in\C^{n\times d}$, $y\in\mathbb{C}^{n}$, a probability distribution $p\in\mathbb{R}^{n}$ and $r\in\{d,\dots,n\}$, compute $\hat{x}\approx x^{\ast}=\arg\min_{x\in\mathbb{C}^{d}}\| Ax-y\|_{2}$. }

\medskip

\begin{algorithmic}[1]

\For {$i=1,\dots,r$ independently} 
\State set $M_{i\cdot}=\underline{e}^{T}_{j}/\sqrt{p_{j}}$, with proability $p_{j}$, for $j=1,\dots,n$
\EndFor
\State set $\hat{x}=\arg\min_{x\in\mathbb{R}^{d}}\| (MA)x-My\|_{2}$

\end{algorithmic}

\end{algorithm}

\begin{theorem}[\cite{Drineas2006}, Theorem 3.1]\label{least2}

Let $A\in\mathbb{C}^{n\times d}$, $y\in\mathbb{C}^{n}$ and let $x^{\ast}=\arg\min_{x\in\mathbb{C}^{d}}\| Ax-y\|_{2}$, let $p\in\mathbb{R}^{n}$ be the statistical leverage scores of $[A,y]$ and let $\hat{x}$ be the ouput of Algorithm~\ref{alg:SLS}, then for
$$
r\geq 8 \frac{d+1}{\epsilon^2}\ln \frac{(d+1)}{\delta},
$$
we have
$$
\mathbb{P}\left\{ \| A\hat{x}-y\|_{2}\leq (1+2\epsilon)\| Ax^{\ast}-y\|_{2}\right\}>1-3\delta.
$$

\end{theorem}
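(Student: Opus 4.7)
The plan is to reduce Theorem~\ref{least2} to two probabilistic ``subspace-embedding'' conditions that leverage-score sampling of $[A,y]$ is designed to enforce, and then to combine them with the standard perturbation analysis of linear least squares. Throughout, take a thin QR decomposition $[A,y] = QR$ with $Q = [Q_A,\, q_y] \in \C^{n\times(d+1)}$ having orthonormal columns, so that $\col(Q_A) = \col(A)$; by \eqref{calcqr} the sampling probabilities decompose as $p_i = \|Q_{i\cdot}\|_2^2 = \|Q_{A, i\cdot}\|_2^2 + |q_{y,i}|^2$, summing to $d+1$. Write $u = Ax^{\ast}-y$ for the optimal residual; by the normal equations $Q_A^{\ast} u = 0$, while $u \in \col(Q)$ forces $u = \alpha\,q_y$ with $|\alpha| = \|u\|_2$.

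First I would show that, after absorbing a $1/\sqrt{r}$ scaling into $M$ so that $\mathbb{E}[M^{\ast}M] = I_n$, one has with probability at least $1-\delta$
\begin{equation*}
\|I_d - (MQ_A)^{\ast}(MQ_A)\|_2 \leq 1/2,
\end{equation*}
so that $\sigma_{\min}(MQ_A) \geq 1/\sqrt{2}$ and hence $MA$ has full column rank. The matrix $(MQ_A)^{\ast}(MQ_A)$ is a sum of $r$ i.i.d.\ rank-one contributions of the form $(r\,p_{j_i})^{-1} Q_{A, j_i\cdot}^{\ast} Q_{A, j_i\cdot}$ with mean $I_d/r$; because $\|Q_{A, j\cdot}\|_2^2 \leq p_j$, each term has operator norm at most $1/r$. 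A matrix Chernoff inequality then delivers the bound once $r \gtrsim (d+1)\log((d+1)/\delta)$.

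Second, I would exploit $Q_A^{\ast}u = 0$ to show that with probability at least $1-\delta$
\begin{equation*}
\|(MQ_A)^{\ast}(Mu)\|_2 \leq \tfrac{\epsilon}{\sqrt{2}}\,\|u\|_2,
\end{equation*}
by viewing this as a sum of $r$ i.i.d.\ mean-zero vectors $(u_{j_i}/(r\,p_{j_i}))\,Q_{A, j_i\cdot}^{\ast}$. The key inequality $|u_j|^2 = |\alpha|^2 |q_{y,j}|^2 \leq \|u\|_2^2\,p_j$ shows that the $[A,y]$-leverage scores dominate the variance in every coordinate; a coordinate-wise Bernstein inequality followed by a union bound over the $d$ coordinates of $(MQ_A)^{\ast}(Mu)$ yields the claim provided $r \gtrsim (d+1)\epsilon^{-2}\log((d+1)/\delta)$. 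This is the step driving the $\epsilon^{-2}\log$ dependence in the sample size.

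Finally, on the intersection of these events (together with one further $\delta$-event in the analysis of~\cite{Drineas2006}, accounting for the total $3\delta$ failure probability), the sampled normal equations give $(MA)^{\ast}MA\,(\hat x - x^{\ast}) = (MA)^{\ast}Mu$; passing to the $Q_A$-basis collapses this to $\|A(\hat x - x^{\ast})\|_2 = \|((MQ_A)^{\ast}(MQ_A))^{-1}(MQ_A)^{\ast}Mu\|_2 \leq 2 \cdot (\epsilon/\sqrt{2})\|u\|_2 = \sqrt{2}\,\epsilon\,\|u\|_2$. Combining this with the Pythagorean identity $\|A\hat x - y\|_2^2 = \|A(\hat x - x^{\ast})\|_2^2 + \|u\|_2^2$ yields $\|A\hat x - y\|_2 \leq \sqrt{1+2\epsilon^2}\,\|u\|_2 \leq (1+2\epsilon)\|u\|_2$. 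The main obstacle is the bookkeeping of constants: the two sample-complexity requirements must combine to exactly $r \geq 8(d+1)\epsilon^{-2}\log((d+1)/\delta)$, and the matrix Chernoff / Bernstein tail estimates must be instantiated with the precise numerical constants used in~\cite{Drineas2006}, which is where most of the technical work lies.
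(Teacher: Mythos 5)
This theorem is not proved in the paper: it is imported verbatim from \cite{Drineas2006} (their Theorem 3.1) and used as a black box to motivate Algorithm~\ref{alg:SLS}, so there is no internal proof to compare your argument against. Measured against the source it cites, your sketch follows essentially the standard route: the decomposition $[A,y]=QR$ with $Q=[Q_A,q_y]$, the observation that the residual $u=Ax^{\ast}-y$ lies in $\mathrm{span}(q_y)$ so that $|u_j|^2\leq \|u\|_2^2\,p_j$ (which is precisely why the leverage scores of $[A,y]$ rather than of $A$ alone are the right sampling distribution), the two structural conditions $\|I-(MQ_A)^{\ast}(MQ_A)\|_2\leq 1/2$ and $\|(MQ_A)^{\ast}Mu\|_2\leq(\epsilon/\sqrt{2})\|u\|_2$, and the Pythagorean assembly at the end. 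This is correct in outline and is the right proof strategy.

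Three loose ends are worth flagging. First, the identity $\col(Q_A)=\col(A)$ and the invertibility of $R_A$ in your final step require $A$ to have full column rank; the theorem statement does not say this, and a complete proof must either assume it or rephrase the conclusion in terms of $\col(A)$. Second, you invoke ``one further $\delta$-event'' without saying what it is; in \cite{Drineas2006} the $3\delta$ arises from three explicitly identified failure events, and a proof cannot simply reserve a budget for an unspecified third one --- you need to either locate the third event (it controls the second-moment bound used to pass from the sampled residual back to the true residual) or show that your two events suffice, in which case the conclusion would hold with probability $1-2\delta$. Third, your coordinate-wise Bernstein-plus-union-bound for the cross term costs an extra factor of $d$ inside the per-coordinate tolerance (each coordinate must be at most $\epsilon\|u\|_2/\sqrt{2d}$), which still yields $r=\mathcal{O}(d\epsilon^{-2}\log(d/\delta))$ but will not reproduce the constant $8$ in the stated bound; the cited proof uses a variance/Markov argument for this step instead. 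None of these is a fatal flaw, but as written the sketch establishes the theorem only up to unspecified constants and an unaccounted-for failure event.
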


In practice if we are considering solving an $n\times d$ least squares problem using a random sampling method, then we must be in a scenario where $\uc{O}(nd^2)$ computations are too costly and we are therefore unable to use \eqref{calcqr} to calculate the statistical leverage scores of the matrix $[A,y]$. Thus there is interest in developing efficient methods for approximating the statistical leverage scores of a matrix. Drineas et al present such a method in~\cite{Drineas2012}. Their approach uses random projections and can be tuned to provide approximations with a desired accuracy for a desired reliability probability. While the exact cost of computing this approximation depends on the chosen accuracy and probability it is $\uc{O}\big(nd\log(n)\big)$ for moderate values. In practice their method reliably produces approximate scores that are close enough to the true statistical leverage scores to be useful in sampling methods. 

In this paper we present max-plus statistical leverage scores, which provide a completely different method for approximating the statistical leverage scores of a complex matrix. Our method is potentially less reliable but faster than the method of Drineas et al.  Algorithm~\ref{alg:MPSLS}, presented in Section~\ref{asection}, returns the max-plus statistical leverage scores of a dense $n\times d$ matrix with average case cost $\uc{O}(nd+d^3)$, it also has huge potential to exploit sparsity and parallelism. We do not have a theorem guarantying that our approximation will always be within a certain accuracy. The max-plus approximation is a heuristic, which can go badly wrong for specially constructed `nasty' problems. In Section~\ref{nsection} we demonstrate that max-plus statistical leverage scores are able to provide an order of magnitude approximation to the scores of a range randomly generated numerical example problems. In Section \ref{psection} we prove that max-plus statistical leverage scores give the exact asymptotic behavior of the scores of generic matrices of Puiseux series. This theoretical connection between the max-plus and Puiseux series scores underpins Algorithm~\ref{alg:MPSLS} and provides some intuitive justification for the max-plus approximation of fixed complex matrices. 

%he leading cost of our algorithm comes from having to locate the $d$ largest entries in each column of $A\in\mathbb{C}^{n\times d}$. This can be accomplished in one pass using a heap based method with cost $\uc{O}\big(nd\log(d)\big)$, or by using the algorithm quickselect, for which the worst case cost is $\uc{O}(n^2)$, but the average case cost is $\uc{O}(nd)$. Naturally the $d$ columns of $A$ can be treated in parallel and the computation will be a lot cheaper if $A$ is a sparse matrix. Using quickselect our algorithm returns the max-plus statistical leverage scores of a dense $n\times d$ matrix with total average case cost $\uc{O}(nd+d^3)$. Unlike the method of Drineas et al we do not have a theorem guarantying that our approximation will have a certain accuracy. The max-plus approximation is a heuristic, which can go badly wrong for specially constructed `nasty' problems. However, we are able to prove that max-plus statistical leverage can be used to calculate the exact asymptotic behavior of the scores of generic matrices of Puiseux series.  This theoretical connection between the max-plus and Puiseux series scores provides some intuitive justification for the max-plus approximation. 

\subsection{Quick introduction to max-plus algebra}

Max-plus algebra concerns the max-plus semiring $\Rmax=(\R\cup\{-\infty\},\oplus,\otimes)$ where
\begin{equation}
\lc{a}\oplus \lc{b}=\max\{a,b\}, \quad \lc{a}\otimes \lc{b}=\lc{a}+\lc{b}, \quad \hbox{for all $\lc{a,b}\in\Rmax$}.
\end{equation}
 Akian, Bapat and Gaubert show that max-plus algebra can be used
to calculate the exact asymptotic growth rates of the eigenvalues of generic matrices whose entries are Puiseux series~\cite{akia01,akia04}. Gaubert and Sharify where the first to exploit this idea to develop max-plus algebriac methods for approximating the order of magnitude of the eigenvalues of a fixed complex matrix polynomial~\cite{Gaubert2009}.  This approach has since been adapted and expanded to approximate matrix singular values and LU factors~\cite{hook15,hoti16}. In this paper we extend the approach further to include statistical leverage scores. We introduce a definition for max-plus statistical leverage scores, which enables us to calculate the exact asymptotic growth rates of the statistical leverage scores of generic matrices of Puiseux series, and to approximate the statistical leverage scores of a fixed complex matrix. We provide all of the necessary background material in this section. For a more thorougher introduction to max-plus algebra see~\cite{butk10}.

A max-plus matrix $\uc{A}\in\Rmax\nbd$ is simply an $n\times d$ array of elements from $\Rmax$. Max-plus matrix multiplication is defined in analogy to the classical case, for $\uc{A}\in\Rmax\nbd$ and $\uc{B}\in\Rmax^{d\times m}$, the product $(\uc{A}\otimes \uc{B})\in\Rmax^{n\times m}$ is the max-plus matrix with
\begin{equation}
(\uc{A}\otimes \uc{B})_{ij}=\bigoplus_{k=1}^{d}\lc{a}_{ik}\otimes\lc{b}_{kj}=\max_{k=1}^{d}(\lc{a}_{ik}+\lc{b}_{kj}).
\end{equation}
For clarity we will often display equations using max-plus algbraic notation alongside equivalent expressions that only use standard notation. We will use the max-norm to measure the size of vectors $\lc{y}\in\Rmax^n$
\begin{equation}
\|\lc{y}\|_{\max}=\bigoplus_{i=1}^{n}\lc{y}_{i}= \max_{i=1}^{n}\lc{y}_{i},
\end{equation}
although clearly this is not a norm in the usual sense. Instead we can think of $\|\lc{y}\|_{\max}$ as the max-plus algebraic analogue of a $p$-norm over $\C^n$. If we take the formula for the $p$-norm and replace additions with maximums, multiplications with additions and powers with multiplication by scalars, then we arrive at the formula for the max-norm, except for the use of absolute values, which do not have a precise max-plus algebraic analogue.

For $\uc{A}\in\Rmax^{n\times d}$, we define the \emph{max-plus permanent} of $\uc{A}$ by
\begin{equation}\label{permdef}
\perm(\uc{A})=\bigoplus_{\phi\in\Phi(d,n)}\bigotimes_{j=1}^{d}\lc{a}_{\phi(j)j}=\max_{\phi\in\Phi(d,n)}\sum_{j=1}^{d}\lc{a}_{\phi(j)j}, 
\end{equation}
where $\Phi(d,n)$ be the set of all injections from $\{1,\dots,d\}$ to $\{1,\dots,n\}$.  We also define the set of \emph{optimal assignments} of $\uc{A}$ by
\begin{equation}\label{oasdef}
\oas(\uc{A})=\arg\max_{\phi\in\Phi(d,n)}\sum_{j=1}^{d}\lc{a}_{\phi(j)j}.
\end{equation}
For $\phi\in\oas(\uc{A})$ and $i\in\{1,\dots,n\}$, if $\phi(j)=i$ for some $j\in\{1,\dots,d\}$, then we say that $\phi$ \emph{assigns} row $i$ to column $j$ and vice versa. Note that for a square matrix $\uc{A}\in\Rmax^{n\times n}$, the formula \eqref{oasdef} is analogous to the classical formula for the determinant, except for the alternating sign term, which has no max-plus algebraic analogue since $\oplus$ is not invertible. We exploit this connection between max-plus permanents and conventional determinants extensively in Section 3.

Optimal assignments also have a neat operational research interpretation. Suppose that we have $n$ workers and $d$ jobs and that we must assign each job to a unique worker. Let $\uc{A}\in\Rmax^{\nbd}$ be the max-plus matrix with $\lc{a}_{ij}$ equal to the benefit of assigning worker $i$ to job $j$. Then $\perm(\uc{A})$ is the maximum possible total benefit and $\oas(\uc{A})$ is the set of optimal assignments of jobs to workers. 

For  $\uc{A}\in\Rmax^{n\times d}$ and for $i=1,\dots,n$, we define the $i$-\emph{obligated permanent} of $\uc{A}$ by
\begin{equation}\label{opermdef}
\perm(\uc{A},i)=\bigoplus_{\phi\in\Phi(d,n;i)}\bigotimes_{j=1}^{d}\lc{a}_{\phi(j)j}=\max_{\phi\in\Phi(d,n;i)}\sum_{j=1}^{d}\lc{a}_{\phi(j)j}, 
\end{equation}
where $\Phi(d,n;i)$ is the set of all injections $\phi$ from $\{1,\dots,d\}$ to $\{1,\dots,n\}$, with $\phi(j)=i$, for some $j=1,\dots,d$. The $i$-obligated permanent is the maximum weight of an assignment that assignes row $i$. 

Throughout this paper complex matrices will be denoted by capital letters with
their entries denoted by the corresponding lower case letter in the usual way
$A=(a_{ij})\in\C\nbd$. Matrices of Puiseux series will be denoted by
capital letters with a tilde and their entries by the corresponding lower case
letter also with a tilde $\widetilde{A}=(\tilde{a}_{ij})\in\C\{\{z\}\}\nbd$,
where $\C\{\{z\}\}$ denotes the field of Puiseux series. Max-plus matrices will
be denoted by calligraphic capital letters and their entries by the
corresponding lower case calligraphic letter
$\mathcal{A}=(\mathpzc{a}_{ij})\in\Rmax\nbd$. 

For an $n\times d$ matrix $A$, we use the notation  $A([i_{1},\dots,i_m],[j_1,j\dots,j_k])$ to denote the $m\times k$ matrix formed from the $\{i_1,\dots,i_m\}$ rows and $\{j_1,\dots,j_k\}$ columns of $A$. We also use the notation $A([i_{1},\dots,i_m]^c,[j_1,j\dots,j_k]^c)$ to denote the $(n-m)\times (d-k)$ matrix formed from the $\{1,\dots,n\}/\{i_1,\dots,i_m\}$ rows and $\{1,\dots,d\}/\{j_1,\dots,j_k\}$ columns of $A$.

\subsection{Max-plus statistical leverage scores}\label{maxplussection}

Let $\uc{A}\in\Rmax^{n\times d}$ be a max-plus matrix. In analogy to \eqref{slsdef1} we define the \emph{naive max-plus statistical leverage scores}  of $\uc{A}$ to be the vector $\lc{q}(\uc{A})\in\Rmax^n$ with
\begin{align}\label{mpslsdef1}
\lc{q}_{i}(\uc{A}) & =\max_{\lc{x}\in\Rmax^d}\Big((\uc{A}\otimes \lc{x})_{i}-\|\uc{A}\otimes \lc{x}\|_{\max}\Big)^{\otimes 2} \\ &=2\max_{\lc{x}\in\Rmax^d}\big(\max_{j=1}^d (\lc{a}_{ij}+\lc{x}_{j})-\max_{i=1}^n \max_{k=1}^d (\lc{a}_{ik}+\lc{x}_{k})\Big), \quad \hbox{for $i=1,\dots,n$}. \nonumber
\end{align}
Although this definition of max-plus statistical leverage scores looks very similar to the complex matrix case, it turns out to not be very useful for our purposes. Instead we will use a slightly different definition, which is chosen to permit the results presented in Section \ref{psection}. 

We define the \emph{max-plus statistical leverage score} of $\uc{A}$ to be the vector $\lc{p}(\uc{A})\in\Rmax^n$ with
\begin{equation}\label{mpslsdef2}
\lc{p}_{i}\big(\uc{A})=2\big(\perm(\uc{A},i)-\perm(\uc{A})\big), \quad \hbox{for $i=1,\dots,n$}.
\end{equation}
If there is an optimal assignment of $\uc{A}$ that assigns row $i$, then $\lc{p}_{i}(\uc{A})=0$. Otherwise, $\lc{p}_{i}\big(\uc{A})$ equals minus two times the smallest bonuses that needs to be applied to row $i$ in order for there to exist an optimal assignment that assigns row $i$.

We can convert any vector $\lc{p}\in\Rmax^n$ into a probability distribution using the \emph{softmax} function 
\begin{equation}\label{softmax}
\sigma(\lc{p})_{i}=\frac{10^{\lc{p}_{i}}}{\big(\sum_{j=1}^{n}10^{\lc{p}_{j}}\big)}, \quad \hbox{for $i=1,\dots,n$}.
\end{equation}
The following heuristic shows how we can use max-plus statistical leverage scores to approximate the conventional statistical leverage scores of a complex matrix. 
\begin{heuristic}\label{happrox}
Let $A\in\C\nbd$ be of rank $k$, then
$$
p(A)/k\approx \sigma\big(\lc{p}(\log |A|)\big),
$$
where $\log |A|\in\Rmax\nbd$ is the componentwise log of absolute value of $A$.
\end{heuristic}

\begin{example}\label{goodegg1}
Consider
$$
A=\left[\begin{array}{cc} 1000 & 1000 \\ 1 & 100 \\ 10 & 1 \end{array}\right], \quad \uc{A}=\log|A|=\left[\begin{array}{cc} 3 & 3 \\ 0 & 2 \\ 1 & 0 \end{array}\right]. 
$$
By taking the QR decomposition of $A$ we calculate its statistical leverage score distribution to be $p(A)=[0.4999  ,  0.4959  ,  0.0041]$. The naive max-plus statistical leverage scores of $\uc{A}$ are given by $\lc{q}(\uc{A})=[0,-1,-2]$. To compute the max-plus statistical leverage scores of $\uc{A}$ we need to compute the permanent of $\uc{A}$. This is given by $\perm(\uc{A})=3+2=5$, which is attained by $\phi=(1,2)$. Since $\phi$ assigns rows 1 and 2, these rows have score zero. The $3$-obligated permanent is given by $\perm(\uc{A},3)=3+1=4$, which is attained by $\phi_{3}=(3,1)$. The statistical leverage scores are therefore given by $\lc{p}(\uc{A})=[0,0,-2]$.

The naive max-plus statistical leverage scores of $\uc{A}$ result in the distribution $\sigma\big(\lc{q}(\uc{A})\big)=[ 0.9009,    0.0901   , 0.0090]$. The max-plus statistical leverage scores of $\uc{A}$ result in the distribution $\sigma\big(\lc{p}(\uc{A})\big)=[0.4975 ,   0.4975,    0.0050]$, which provides an order of magnitude approximation to the true statistical leverage score distribution $p(A)$.

\end{example}

\begin{example}\label{badegg1}
Consider
$$
A=\left[\begin{array}{cc} 1000 & 1000 \\ 1 & 100 \\ 10 & 10 \end{array}\right], \quad \uc{A}=\log|A|=\left[\begin{array}{cc} 3 & 3 \\ 0 & 2 \\ 1 & 1 \end{array}\right]. 
$$
We have $p(A)=[0.5000,    0.5000,    0.00005]$, but $\lc{p}(\uc{A})=[0,0,-2]$, which results in the approximate scores $\sigma\big(\lc{p}(\uc{A})\big)=[0.4975 ,   0.4975,    0.0050]$. In this example the max-plus approximation fails to capture the order of magnitude of the score of row 3.
\end{example}

To understand why the max-plus approximation works better for Example \ref{goodegg1} than Example \ref{badegg1} it is important to consider the fact that the max-plus approximation is `blind to sign' in the sense that it does not depend on the sign or complex argument of the entries in $A$. To illustrate this point further we randomly generate many matrices with the same sized entires as the previous example problems but with independent, uniformly distributed complex arguments. For each of these randomly generated matrices we compute the log of the statistical leverage score of row 3 and plot a histogram of the results. See Figure \ref{hisplot}. Note that for Example  \ref{goodegg1} the scores are always confined to a narrow band, which is within an order of magnitude of the max-plus approximation. But that for Example \ref{badegg1} the scores have a light lower tail, so that there is a set of small measure for which the max-plus approximation is not accurate to within an order of magnitude. The matrix $A$ in Example \ref{badegg1} belongs to this small measure set.

The only direct support for Heuristic~\ref{happrox} comes from the empirical evidence presented in Section~\ref{nsection}. We have no theorem saying that the accuracy of the approximation should be within a certain factor for an arbitrary complex matrix. However, for nearly all of the test matrices $A$ that we have examined, we find that the max-plus approximation does capture the order of magnitude of all of $A$'s statistical leverage scores. In the few cases where the statistical leverage scores of the matrix $A$ are poorly approximated by the max-plus approximation, we find that applying a random perturbation to the complex arguments of the entries of $A$ results in a matrix $A'$ whose statistical leverage scores are well approximated by the max-plus approximation. In this sense we say that the max-plus approximation provides an order of magnitude approximation for the statistical leverage scores for all but a small measure set of `nasty' matrices. 

Of course this is cold comfort if we are interested in approximating the scores of a particular matrix $A$ that happens to fall in this nasty set. Ultimately the success of the max-plus approximation in practice will depend on identifying domains of problems and compatible preprocessing techniques that give rise to matrices where these problems either tend not to occur, or where some of these problems occur but the max-plus approximate statistical leverage scores are still useful in downstream applications. We should note that issues of this sort are common to other methods which use max-plus algebra to approximate classical linear algebra objects including eigenvalues~\cite{Gaubert2009} and LU factors~\cite{hoti16}. In these other applications we find that max-plus methods tend to work well on large sparse matrices from practical problems, particularly highly unstructured problems with a large range of entry sizes.

 \begin{figure}
\begin{center}
\subfigure[]{\includegraphics[scale=0.4]{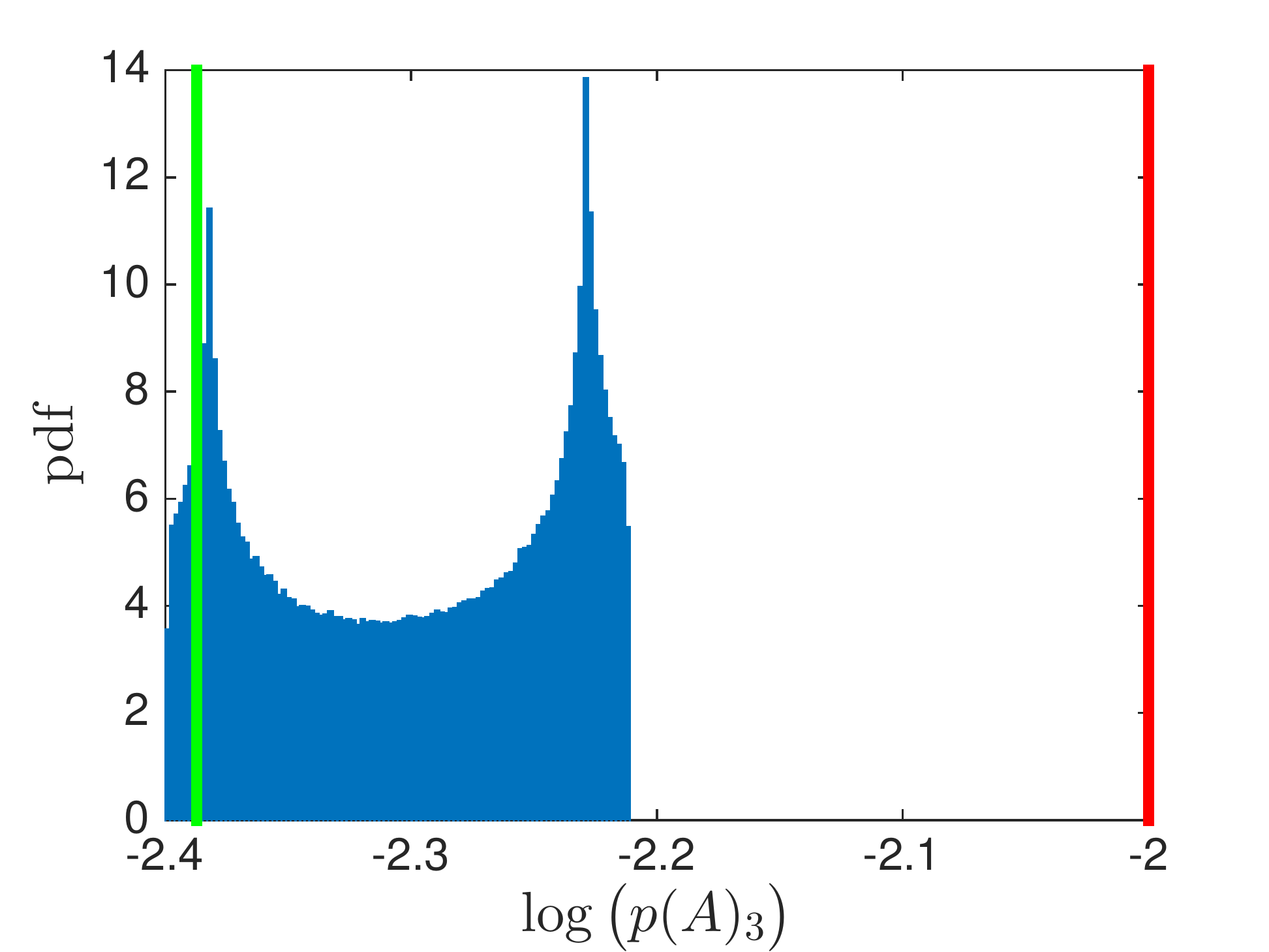}}\hspace{.4cm}
\subfigure[]{\includegraphics[scale=0.4]{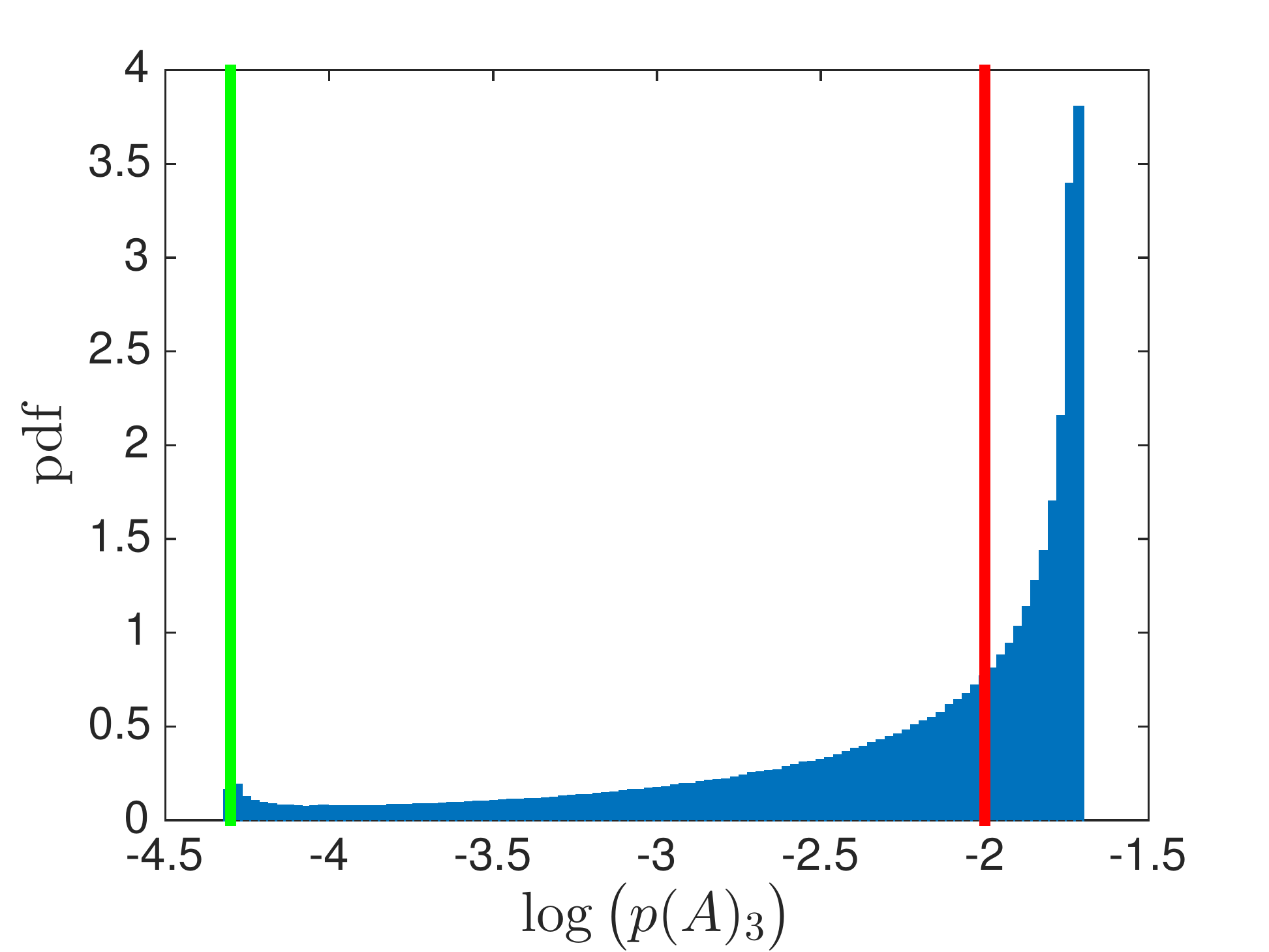}}\hspace{.4cm}
\caption{Distribution of $\log\big(p(A)_{3}\big)$ for randomly generated matrices based on a) Example \ref{goodegg1} , b) Example \ref{badegg1}. Max-plus approximation marked in red, score of example problem in green.}
\label{hisplot}
\end{center}
\end{figure}

\section{Numerical experiments}\label{nsection}

In this section we apply the max-plus approximation of Heuristic~\ref{happrox} to some larger numerical examples and compare using different sampling distributions in Algorithm~\ref{alg:SLS}. The matrices that we use are randomly generated using the scheme set out in (\cite{Ma2015}, Section 4.1). The \emph{coherence} of a matrix $A\in\Cnd$ is equal to its largest individual statistical leverage score. A coherent matrix, with a large coherence value, will have a wide range of statistical leverage scores. An incoherent matrix, with a small coherence value, will have more uniform statistical leverage scores. We set $n=10^5$, $d=50+1$ and $\Sigma\in\mathbb{R}^{d\times d}$, with $\Sigma_{ij}=2\times 0.5^{|i-j|}$. The example matrices are generated as follows.

\smallskip

\indent {\bf Incoherent example:} Each row of $A\in\mathbb{R}^{n\times d}$ is chosen independently from a multivariate Gaussian distribution $\uc{N}(\underline{1},\Sigma)$, where $\underline{1}\in\mathbb{R}^{d}$ is a vector of ones. 

\smallskip

\indent {\bf Semi-coherent example:} Each row of $A\in\mathbb{R}^{n\times d}$ is chosen independently from a multivariate t-distribution $t_{3}(\underline{1},\Sigma)$, with three degrees of freedom.

\smallskip

\indent {\bf Coherent example:} Each row of $A\in\mathbb{R}^{n\times d}$ is chosen independently from a multivariate t-distribution $t_{1}(\underline{1},\Sigma)$, with one degree of freedom.

\smallskip

For each matrix  $A\in\mathbb{R}^{n\times d}$, we compute the exact statistical leverage score probability distribution $p(A)/d$, using \eqref{calcqr}. We use  Algorithm~\ref{alg:MPSLS} to compute the max-plus approximation $\sigma\big(\lc{p}(\log|A|)\big)$. For comparison we also compute an alternative statistical leverage score approximation which, like the max-plus approximation, only depends on the size of the entries in $A$. The \emph{column normalized row norms} (CNRN) scores of $A$ are given by
\begin{equation}\label{cnrn}
q(A)_{i}=\| C_{i\cdot}\|_{2}^{2}, \quad \hbox{for $i=1,\dots,n$},
\end{equation}
where $C=AD^{-1}$ and $D\in\mathbb{R}^{d\times d}$ is the diagonal matrix with $d_{jj}=\| A_{\cdot j}\|_{2}$, for $j=1,\dots,d$. We compute the CNRN probability distribution $q(A)/d$ for each matrix. Note that, although their computation is far more straightforwards, the cost of computing the CNRB scores is the same order as the max-plus scores.

For each example matrix we formulate and solve the least squares problem $x^{\ast}=\arg\min_{x\in\mathbb{R}^{d-1}}\|Bx-y\|_{2}$, where $B=A([1,\dots,n],[1,\dots,d-1])$ and $y=A([1,\dots,n],[d])$. We then compute approximate solutions using Algorithm~\ref{alg:SLS}. For each different sampling distribution and a range of values of $r$, we run one hundred independent instances of Algorithm~\ref{alg:SLS}. The results of these experiments are displayed in Figure~\ref{numplot}.

For the incoherent example the exact statistical leverage scores are nearly uniform. Both approximation methods capture the order of magnitude of all of the scores and all of the different sampling methods have the same performance. For the semi-coherent example the exact statistical leverage scores range between $10^{-2}$ and $10^{-6}$. Both approximation methods capture the order of magnitude of all of the scores, but the CNRN approximation is slightly more accurate. The rows with the largest exact scores are under approximated by the max-plus scores, but never by more than a factor of ten. The uniform sampling method does not perform as well as the other methods in this example. For the coherent example the exact statistical leverage scores range between $10^{-2}$ and $10^{-10}$. The max-plus approximation captures the order of magnitude of all of the scores but the CNRN scores over approximate the largest scores and under approximate many of the smaller scores. The uniform sampling method performs very poorly on this problem, the CNRN method does not perform as well as the exact statistical leverage scores method or the max-plus approximation method, which both perform well.

 \begin{figure}
\begin{center}
\subfigure[]{\includegraphics[scale=0.3]{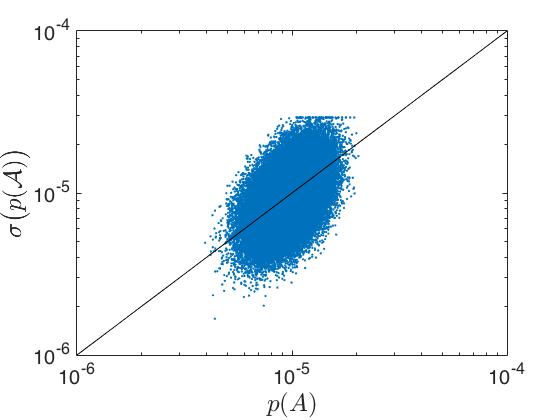}\includegraphics[scale=0.3]{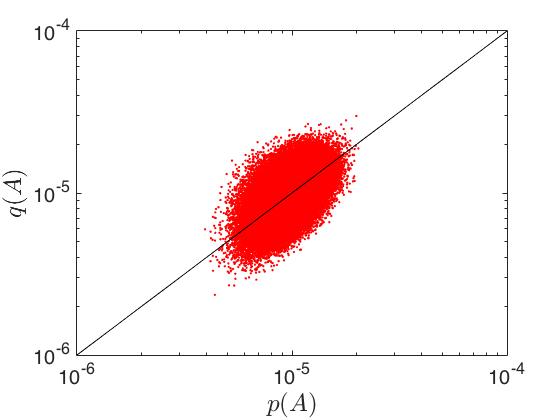}\includegraphics[scale=0.3]{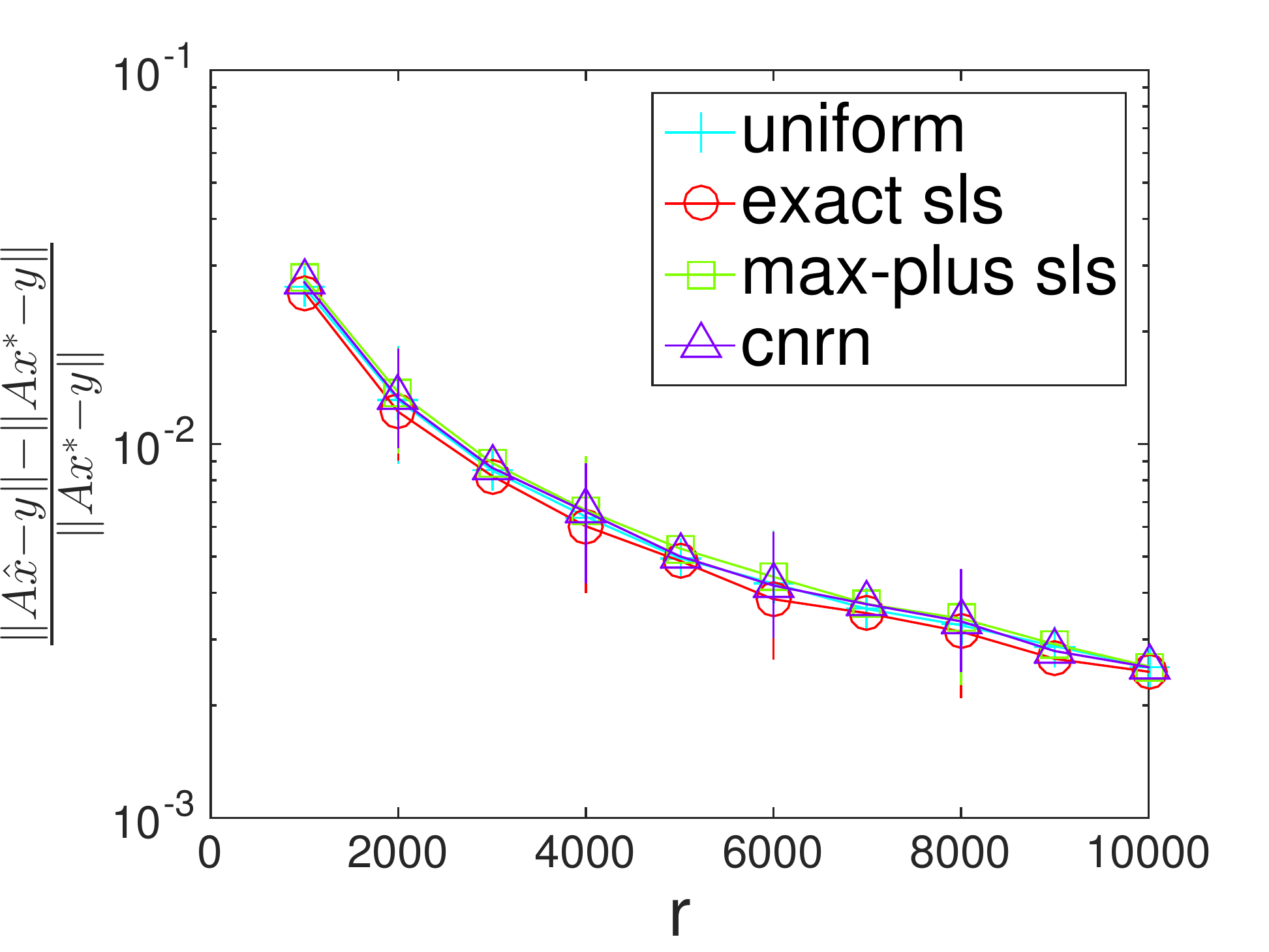}}

\subfigure[]{\includegraphics[scale=0.3]{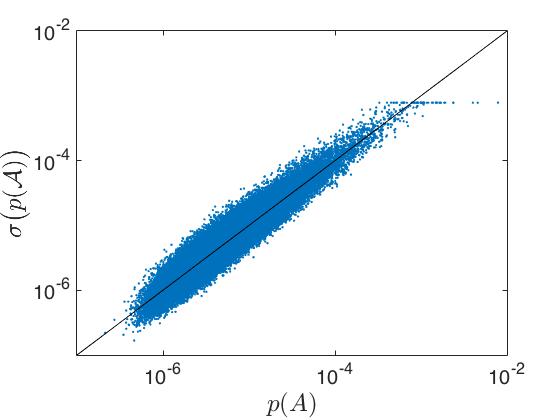}\includegraphics[scale=0.3]{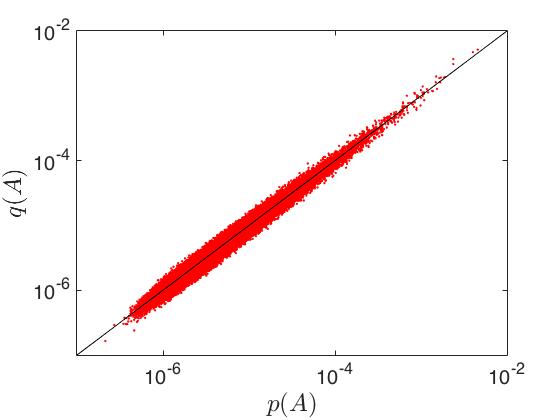}\includegraphics[scale=0.3]{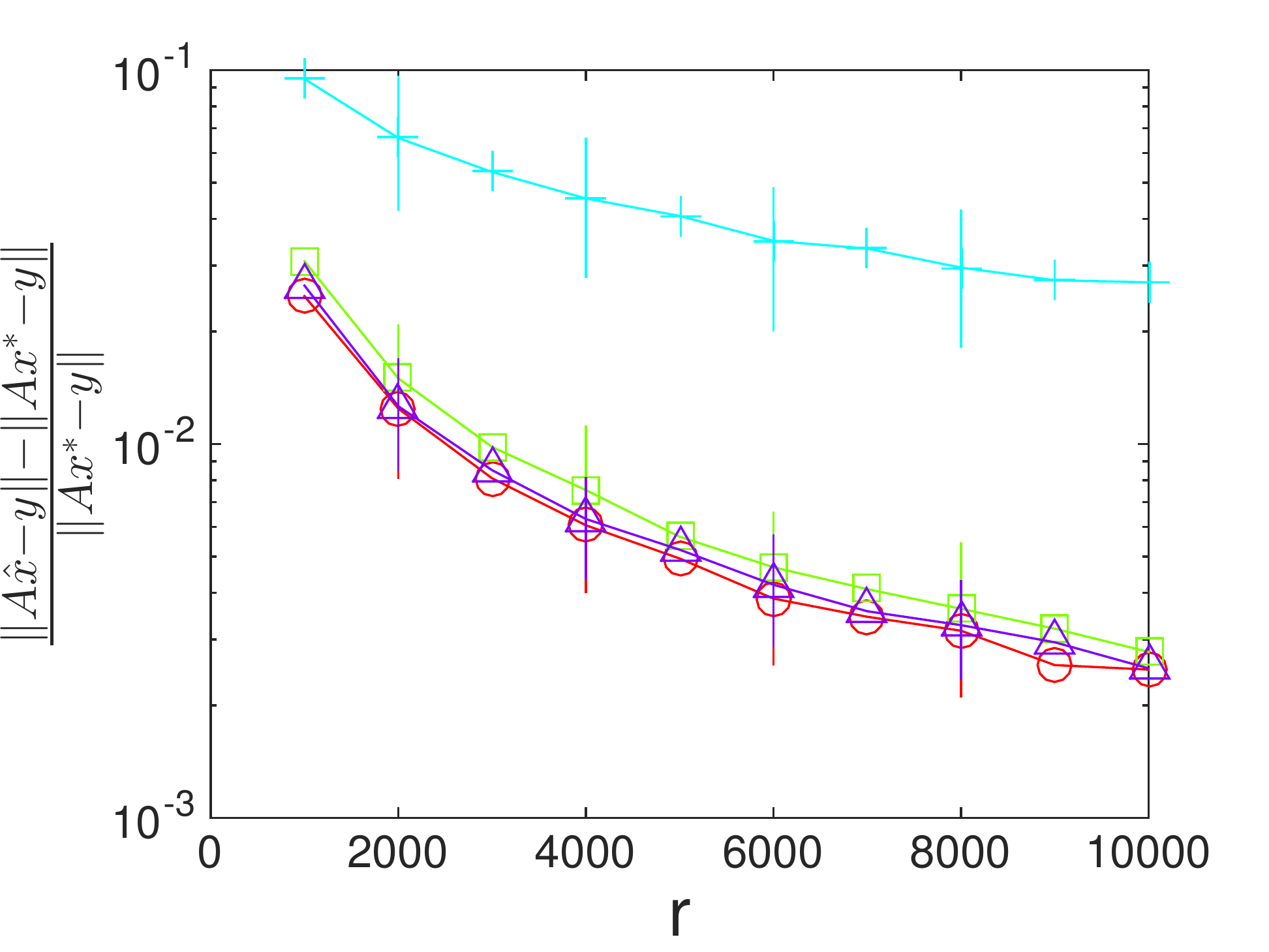}}

\subfigure[]{\includegraphics[scale=0.3]{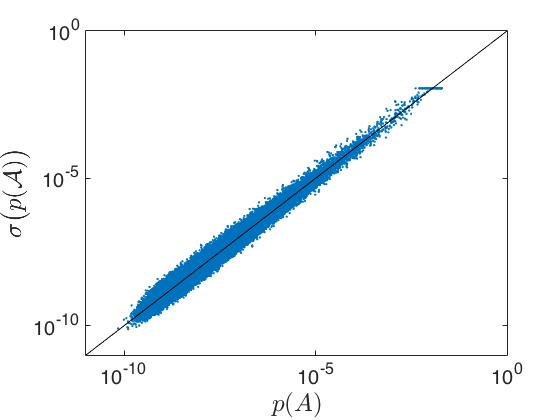}\includegraphics[scale=0.3]{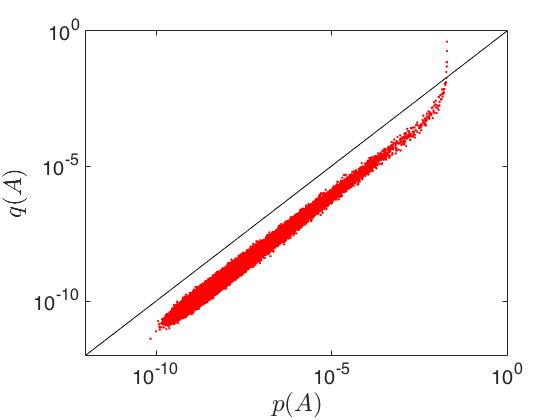}\includegraphics[scale=0.3]{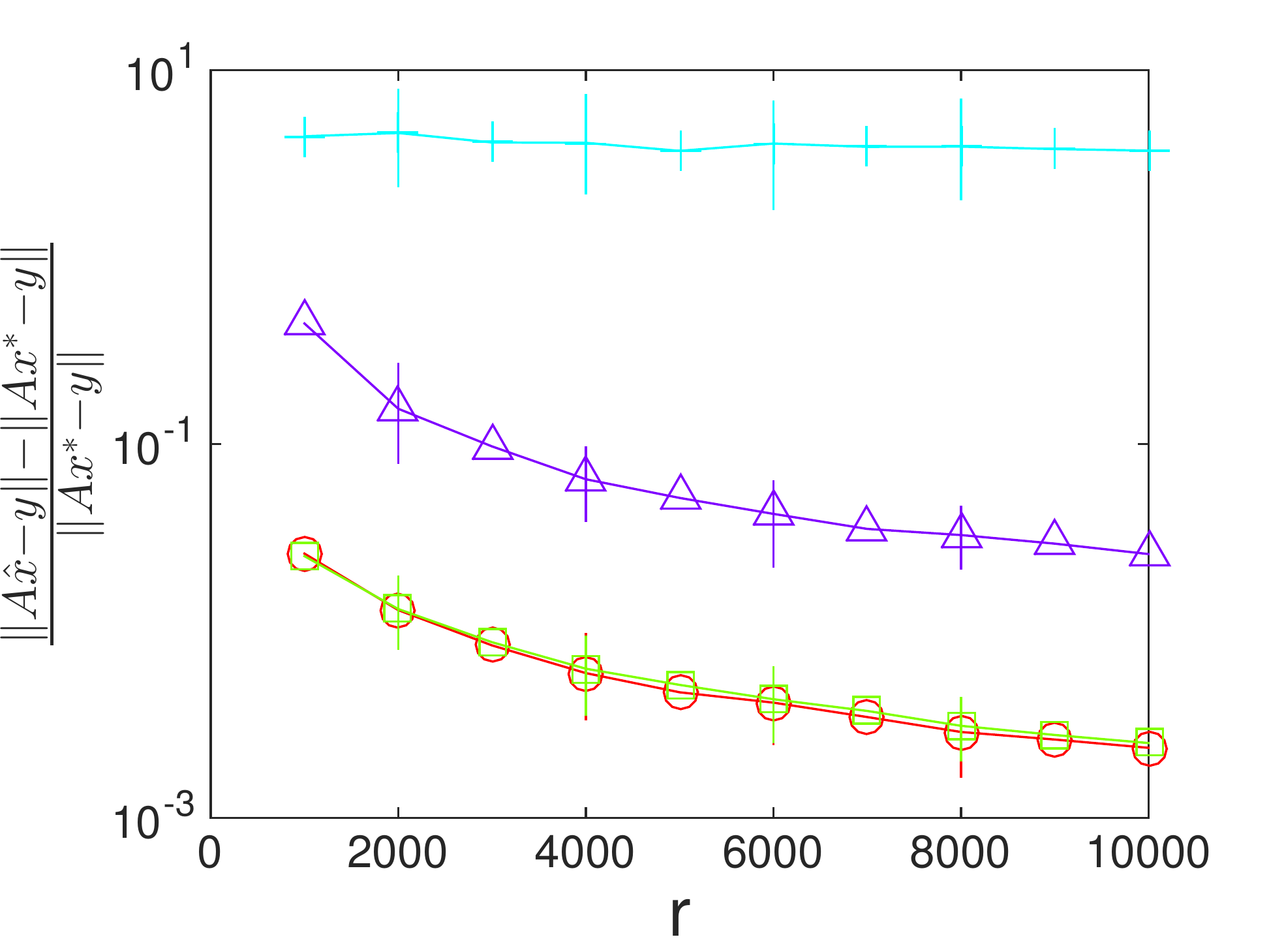}}
\caption{Left: scatterplot of exact statistical leverage scores vs max-plus approximation. Middle: scatter plot of exact statistical leverage scores vs CNRN approximation. Right: error vs sample size for sampled least squares approximations. Errors plotted are the geometric mean of one hundred independent trials, vertical bars show 90\% range. For a) incoherent example, b) semi-coherent example, c) coherent example.   }
\label{numplot}
\end{center}
\end{figure}

\section{Asymptotics of generic matrices of Puiseux series}\label{psection}

Whilst we cannot prove any results relating the statistical leverage scores of a complex matrix to the max-plus approximation we are able to prove a connection between the statistical leverage scores of a matrix of Puiseux series and the max-plus statistical leverage scores of a related matrix. Although we are ultimately interested in fixed complex matrices and not matrices of Puiseux series, this theory is important as it provides the intuition as to how and why the max-plus approximation is able to work. Indeed, the proof of  Theorem \ref{lemthm} can be viewed as a derivation for our method and understanding its proof will give useful insight into the workings of Algorithm~\ref{alg:MPSLS}.

A Puiseux series is a generalization of a power series that allows for negative and fractional powers:
\begin{equation}
\tilde{f}(z)=\sum_{i=k}^{\infty}c_{i}z^{\frac{i}{m}},
\end{equation}
for $m\in\mathbb{N}$, $k\in\mathbb{Z}$, $c_{i}\in\C$ for $i\ge k$ and $c_{k}\neq
0$.  Puiseux series form an algebraically closed field under addition and multiplication which we denote
$\mathbb{C}\{\{z\}\}$. 

We define the \emph{coefficient of the lowest order term} map $L:\P\mapsto \C$, by $L(\tilde{f})=c_{k}$. We also define the \emph{valuation} map $V:\P\mapsto \Rmax$, by $V(\tilde{f})=\frac{-k}{n}$, i.e. minus the degree of the lowest order term. The valuation of a Puiseux series tells us how quickly it blows up when evaluated at small values of $z$. Valuation provides an almost everywhere homeomorphism since
\begin{align}
\label{timesplus} V(\tilde{f}\tilde{g})&=V(\tilde{f})\otimes V(\tilde{g}), \quad \hbox{for all $\tilde{f},\tilde{g}\in\P$}, \\
\nonumber V(\tilde{f}+\tilde{g})&=V(\tilde{f})\oplus V(\tilde{g}), \quad \hbox{for almost all $\tilde{f},\tilde{g}\in\P$},
\end{align}
where the second property holds unless $V(\tilde{f})=V(\tilde{g})$ and $L(\tilde{f})=-L(\tilde{g})$. We apply $V$ and $L$ to matrices of Puiseux series componentwise in the obvious way.

Since there is no Puiseux series analogue for the complex conjugate,  we are not able to define a consistent inner product on $\P^n$ and therefore cannot define the statistical leverage scores of a matrix of Puiseux series directly. Instead we will evaluate our matrix of Puiseux series $\tilde{A}\in\P\nbd$, at a value $z\in\C$, to obtain a complex matrix $\tilde{A}(z)\in\C\nbd$ and then compute the statistical leverage scores of this matrix $p\big(\tilde{A}(z)\big)$. These scores can then be thought of as a function of $z$. Although $p\big(\tilde{A}(z)\big)$ will typically not be a Puiseux series, we can still measure its asymptotic growth rate for small $z$ to give the same characterization as valuation. Theorem \ref{lemthm}, which is the main result of this section, states that the asymptotic growth rates of the statistical leverage scores of $\tilde{A}(z)$ are equal to the max-plus statistical leverage scores of the valuation $V(\tilde{A})$.

In the remainder of this section, whenever we are working with a matrix of Puiseux series $\tilde{A}\in\P\nbd$, we will assume that $\tilde{A}$ has no entries identically equal to zero. The same results can be obtained for the case where $\tilde{A}$ contains entries equal to zero but the technical details of the proofs become a lot more complicated.

For $n,d\in\mathbb{N}$ with $n\geq d$, define the set of \emph{good coefficient matrices} $\uc{G}(n,d)\subset \C\nbd$ by 
\begin{equation}
\uc{G}(n,d)=\{C\in\C\nbd: \sum_{\pi\in\Psi}\sgn(\pi)\prod_{k=1}^{d}c_{\phi(k)\pi(k)}\neq 0, \hbox{ for all $\Psi\subset \Pi(d)$ and $\phi\in\Phi(d,n)$}\},
\end{equation}
where $\Pi(d)$ is the set of permutations of $\{1,\dots,d\}$ and $\Phi(d,n)$ is the set of injections from $\{1,\dots,d\}$ to $\{1,\dots,n\}$. In what follows, for a matrix $\tilde{A}\in\P\nbd$, the condition $L(\tilde{A})\in\uc{G}(n,d)$, will be a sufficient condition for our results to hold. The following lemma justifies us saying that our results are therefore generically true for matrices of Puiseux series. Note that although the results that appear in \cite{hoti16} are for square matrices,  generalizing them to rectangular matrices is straightforwards. 

\begin{lemma}[\cite{hoti16}, Lemma 4.2]\label{generic}
$\uc{G}(n,d)$ is a generic (open and dense) subset of $\C\nbd$. Therefore $\{\uc{A}\in\P\nbd:L(\uc{A})\in\uc{G}(n,d)\}$ is a  generic subset of $\P\nbd$, with respect to the topology induced by $L$. 
\end{lemma}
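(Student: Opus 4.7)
The plan is to split the lemma into two parts. First I would establish that $\uc{G}(n,d)$ is open and dense in $\C\nbd$ (with its Euclidean topology), and then transfer this to $\P\nbd$ via the map $L$. For openness, note that $\uc{G}(n,d)$ is the complement of the finite union, over non-empty $\Psi\subseteq\Pi(d)$ and $\phi\in\Phi(d,n)$, of the vanishing loci $Z_{\Psi,\phi}=\{C:P_{\Psi,\phi}(C)=0\}$, where $P_{\Psi,\phi}(C):=\sum_{\pi\in\Psi}\sgn(\pi)\prod_{k=1}^{d}c_{\phi(k)\pi(k)}$. Each $P_{\Psi,\phi}$ is a polynomial in the $nd$ entries of $C$, so each $Z_{\Psi,\phi}$ is Zariski, hence Euclidean, closed, and since there are only finitely many such loci their union is closed and its complement $\uc{G}(n,d)$ is open.

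For density, the crucial step is to verify that every $P_{\Psi,\phi}$ is a non-zero polynomial, so that each $Z_{\Psi,\phi}$ is a proper algebraic subvariety and hence nowhere dense; a finite union of nowhere dense closed sets is then nowhere dense, so its complement $\uc{G}(n,d)$ is dense. To see $P_{\Psi,\phi}\not\equiv 0$ I would exploit the injectivity of $\phi$: the row indices $\phi(1),\dots,\phi(d)$ are pairwise distinct, so from any monomial $\prod_k c_{\phi(k)\pi(k)}$ one can read off the pairs $(\phi(k),\pi(k))$ and then use $\phi^{-1}$ to reconstruct the permutation $\pi$. Consequently the monomials appearing in $P_{\Psi,\phi}$, indexed by $\pi\in\Pi(d)$, are pairwise distinct, and $P_{\Psi,\phi}$ is a nontrivial $\pm 1$-combination of distinct monomials whenever $\Psi\neq\emptyset$.

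The second assertion then follows by pullback. Since the topology on $\P\nbd$ induced by $L$ declares a set to be open iff it is of the form $L^{-1}(V)$ for $V$ open in $\C\nbd$, the set $\{\tilde{A}:L(\tilde{A})\in\uc{G}(n,d)\}=L^{-1}(\uc{G}(n,d))$ is automatically open. For density, any $C\in\uc{G}(n,d)$ is the image under $L$ of the constant Puiseux matrix $\tilde{A}(z)\equiv C$ (whose entries are all nonzero by definition of $\uc{G}(n,d)$), so $L$ surjects onto $\uc{G}(n,d)$, and in the pullback topology the preimage of a dense set under a surjection is dense. The only nontrivial step in the whole argument is the nonvanishing of $P_{\Psi,\phi}$, and as noted this reduces to a direct combinatorial observation; the remainder is standard point-set topology, and since the argument is completely symmetric in the roles of $n$ and $d$ it extends the square-matrix proof of~\cite{hoti16} verbatim.
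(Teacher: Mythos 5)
Your argument is correct, and it is the standard one: the paper itself offers no proof of this lemma (it is quoted from \cite{hoti16}), but the expected argument is exactly yours --- $\uc{G}(n,d)$ is the complement of a finite union of zero sets of polynomials $P_{\Psi,\phi}$, each of which is nontrivial because the injectivity of $\phi$ makes the monomials indexed by distinct $\pi\in\Psi$ pairwise distinct, so no cancellation among the $\pm 1$ coefficients can occur. The two points worth making explicit, which you do handle, are that the defining condition must be read over non-empty $\Psi$ only, and that density of the preimage under $L$ needs $\uc{G}(n,d)$ to lie in the image of $L$, which follows since membership in $\uc{G}(n,d)$ (take $\Psi$ a singleton) forces every entry of $C$ to be non-zero.
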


\begin{lemma}[\cite{hoti16}, Lemma 4.4]\label{submatrix}
Let $C\in\uc{G}(n,d)$ and let $C([i_{1},\dots,i_{m}],[j_{1},\dots,j_{\ell}])$ be an $m\times \ell$ submatrix of $C$, then $C([i_{1},\dots,i_{m}],[j_{1},\dots,j_{\ell}])\in\uc{G}(m,\ell)$.
\end{lemma}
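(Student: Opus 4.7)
The plan is to show that every nonzero-sum test required by the definition of $\uc{G}(m,\ell)$ for the submatrix $C'=C([i_1,\ldots,i_m],[j_1,\ldots,j_\ell])$ equals, up to a fixed nonzero scalar, a nonzero-sum test that $C\in\uc{G}(n,d)$ already satisfies. Concretely, given arbitrary $\phi'\in\Phi(\ell,m)$ and $\Psi'\subset\Pi(\ell)$, I would construct $\phi\in\Phi(d,n)$ and $\Psi\subset\Pi(d)$ such that
$$\sum_{\pi\in\Psi}\sgn(\pi)\prod_{k=1}^{d}c_{\phi(k)\pi(k)} \;=\; T\sum_{\pi'\in\Psi'}\sgn(\pi')\prod_{b=1}^{\ell}c_{i_{\phi'(b)},\,j_{\pi'(b)}},$$
where $T$ is a fixed product of entries of $C$ indexed by the columns outside $\{j_1,\ldots,j_\ell\}$. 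Since $C\in\uc{G}(n,d)$ makes the left-hand side nonzero, this identity will force the sum for the submatrix on the right to be nonzero, which is exactly the condition required for $C'\in\uc{G}(m,\ell)$.

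Before constructing the extensions I would record an easy preliminary fact: every entry of $C$ is nonzero. Specializing the $\uc{G}(n,d)$ condition to singleton $\Psi=\{\pi\}$ forces $\prod_k c_{\phi(k)\pi(k)}\neq 0$ for every $\phi\in\Phi(d,n)$ and every $\pi\in\Pi(d)$, and since $n\geq d$ any entry $c_{ab}$ occurs as such a factor for a suitable choice of $\phi$ and $\pi$. This will guarantee that the scalar $T$ appearing in the identity above is nonzero.

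For the construction, let $\{j_{\ell+1},\ldots,j_d\}$ enumerate the remaining columns of $C$. I would set $\phi(j_b)=i_{\phi'(b)}$ for $b\leq\ell$ and extend $\phi$ to $b>\ell$ by picking any $d-\ell$ distinct rows in $\{1,\ldots,n\}\setminus\{i_{\phi'(1)},\ldots,i_{\phi'(\ell)}\}$, which is possible because $n\geq d$. For each $\pi'\in\Psi'$ I would extend it to $\pi_{\pi'}\in\Pi(d)$ by $\pi_{\pi'}(j_b)=j_{\pi'(b)}$ for $b\leq\ell$ and $\pi_{\pi'}(j_b)=j_b$ otherwise, and set $\Psi=\{\pi_{\pi'}:\pi'\in\Psi'\}$. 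Splitting the product across the two blocks of columns then yields the displayed identity with $T=\prod_{b>\ell}c_{\phi(j_b),\,j_b}$, which is nonzero by the preliminary step.

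The only delicate point, which I view as the main place to be careful, is sign bookkeeping: one needs $\sgn(\pi_{\pi'})=\sgn(\pi')$ and the map $\pi'\mapsto\pi_{\pi'}$ to be injective, so that the sign-weighted sum over $\Psi$ really is $T$ times the sign-weighted sum over $\Psi'$ with no cancellation or multiplicity introduced. Both facts follow because $\pi_{\pi'}$ differs from the permutation induced by $\pi'$ only by additional fixed points (after identifying $\{j_1,\ldots,j_\ell\}$ with $\{1,\ldots,\ell\}$ via $b\leftrightarrow j_b$), so the parity formula $(-1)^{n-c(\sigma)}$ gives equal values on $\pi'$ and $\pi_{\pi'}$. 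No analytic difficulty arises; once the index substitution is pinned down, the identity is direct and the conclusion follows immediately from the defining property of $\uc{G}(n,d)$.
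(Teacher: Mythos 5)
Your proof is correct and complete. The paper itself does not prove this lemma --- it only cites it from the reference --- so there is no in-paper argument to compare against, but your construction (extend $\phi'$ to an injection $\phi\in\Phi(d,n)$ using $n\geq d$, extend each $\pi'$ by fixed points on the complementary columns so that signs and distinctness are preserved, and factor out the constant $T=\prod_{b>\ell}c_{\phi(j_b),j_b}$) is the natural and standard route, and you correctly handle the two points that actually need care: the preliminary observation that singleton sets $\Psi$ force every entry of $C$ to be nonzero, which is what makes $T\neq 0$, and the cycle-count argument showing $\sgn(\pi_{\pi'})=\sgn(\pi')$.
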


The following Lemma is fundamental to our approach of using max-plus algebra to calculate the asymptotic behavior of generic matrices of Puiseux series. 
\begin{lemma}[\cite{hoti16}, Lemma 4.1]\label{detperm}
Let $\tilde{M}\in\P^{d\times d}$ and suppose that $\uc{L}(\tilde{M})\in\uc{G}(d,d)$, then
$$
V\big(\det(\tilde{M})\big)= \perm\big(V(\tilde{M})\big).
$$
\end{lemma}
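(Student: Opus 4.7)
The plan is to expand the determinant by the Leibniz formula and then track lowest-order terms using the two properties of valuation in \eqref{timesplus}. Writing
\[
\det(\tilde{M})=\sum_{\pi\in\Pi(d)}\sgn(\pi)\prod_{k=1}^{d}\tilde{m}_{k\pi(k)},
\]
the multiplicative property $V(\tilde{f}\tilde{g})=V(\tilde{f})+V(\tilde{g})$ applies unconditionally, so for each permutation $\pi$ the product has valuation $\sum_{k=1}^{d}\lc{m}_{k\pi(k)}$, where $\uc{M}:=V(\tilde{M})$. In particular the permutations that realise the maximum of these sums are exactly $\oas(\uc{M})$, and the corresponding common value is $\perm(\uc{M})$.

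Next I would separate the terms in the Leibniz sum into $\Psi^{\ast}:=\oas(\uc{M})$ and its complement. Terms indexed by $\pi\notin\Psi^{\ast}$ are Puiseux series whose valuations are strictly less than $\perm(\uc{M})$, so they only contribute to coefficients of $z^{\alpha}$ with $\alpha>-\perm(\uc{M})$ and cannot affect the coefficient of $z^{-\perm(\uc{M})}$. The coefficient of $z^{-\perm(\uc{M})}$ in $\det(\tilde{M})$ is therefore
\[
\sum_{\pi\in\Psi^{\ast}}\sgn(\pi)\prod_{k=1}^{d}L(\tilde{m}_{k\pi(k)}),
\]
which uses that $L$ is multiplicative on products of nonzero Puiseux series and that, within $\Psi^{\ast}$, the lowest-order terms all sit at the same power of $z$ and thus add as complex numbers.

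The main obstacle, and the only place where the hypothesis is needed, is to show that this coefficient is nonzero so that it really determines $V(\det\tilde{M})$. This is exactly the hypothesis $L(\tilde{M})\in\uc{G}(d,d)$ applied with $\phi$ equal to the identity injection of $\{1,\dots,d\}$ into itself and $\Psi=\Psi^{\ast}$: the defining property of $\uc{G}(d,d)$ says the signed sum above is nonzero. Once nonvanishing is established, $\det(\tilde{M})$ has a genuine lowest-order term at power $z^{-\perm(\uc{M})}$, so by definition $V(\det(\tilde{M}))=\perm(\uc{M})=\perm(V(\tilde{M}))$, which is the desired identity. I would conclude by remarking that this derivation also explains why the genericity condition has to apply to arbitrary subsets $\Psi\subset \Pi(d)$: when the lemma is applied to submatrices or cofactors later in Section~\ref{psection}, the set of optimal assignments of the submatrix can be any such $\Psi$, and Lemma~\ref{submatrix} is precisely what makes this hypothesis hereditary.
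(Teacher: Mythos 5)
Your proposal is correct: expanding $\det(\tilde{M})$ by the Leibniz formula, noting that each permutation's term has valuation $\sum_{k}V(\tilde{m}_{k\pi(k)})$, and invoking $L(\tilde{M})\in\uc{G}(d,d)$ with $\phi=\mathrm{id}$ and $\Psi$ the set of optimal permutations to rule out cancellation of the coefficient of $z^{-\perm(V(\tilde{M}))}$ is exactly the intended argument. The paper itself gives no proof here -- it imports the result from the cited reference -- so there is nothing to contrast with; your derivation, including the closing observation that the quantification over all $\Psi$ and $\phi$ is what makes the hypothesis survive passage to submatrices via Lemma~\ref{submatrix}, fills that gap correctly.
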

\begin{example}
Consider
$$
\tilde{M}=\left[\begin{array}{cc} z^{-3}+2z & z^{-2} \\ -z^{-2}+2 &  1+z \end{array}\right], \quad V(\tilde{M})=\left[\begin{array}{cc} 3 & 2  \\ 2 &  0 \end{array}\right].
$$
We have $\det(\tilde{M})=z^{-4}+z^{-3}-z^{-2}+2z+2z^2$ and $V\big(\det(\tilde{M})\big)=4$, which agrees with $\perm\big(V(\tilde{M})\big)=\max\{3+0,2+2\}=4$.
\end{example}

Max-plus linear systems, i.e. equations of the form $\mathcal{A}\otimes
 x=b$, have many applications in scheduling and dynamical systems~\cite{butk10,heid2006}.  Such systems are better understood by studying the symmetrization of max-plus algebra $\mathbb{S}$, which is an extension of
 $\mathbb{R}_{\max}$, that allows for a kind of max subtraction operation (See
 \cite{Baccelli01}, section 3.4 for an introduction).  In this setting it is possible
 to either solve or determine that no solution exists to certain max-plus linear
 equations using a max-plus analogue of Cramer's rule (\cite{Baccelli01}, Section 3.5.2). This approach uses an expression
 for the max-plus inverse of a max-plus matrix, which looks exactly like the conventional Cramer's rule inverse, only with permanents instead of determinants. Typically this max-plus inverse does not provide a functional inverse in the usual sense as only a small subset of all max-plus matrices are invertible. In Lemma~\ref{inverse} we show how to use this same max-plus inverse expression to calculate the asymptotic growth rates of the entries in the inverse of a matrix of Puiseux series. 
 
For $\uc{M}\in\Rmax^{d\times d}$, define the \emph{max-plus inverse} $\uc{M}^{\otimes -1}\in\Rmax^{d\times d}$ by
\begin{equation}\label{maxplusinv}
(\uc{M}^{\otimes -1})_{ij}=\perm\big(\uc{M}([j]^{c},[i]^{c})\big)-\perm(\uc{M}),\quad \hbox{for $i,j=1,\dots,d$}.
\end{equation}
\begin{lemma}\label{inverse}Let $\tilde{M}\in\P^{d\times d}$ and suppose that $\uc{L}(\tilde{A})\in\uc{G}(d,d)$, then $\tilde{M}$ is invertible and
$$
V(\tilde{M}^{-1}) =V(\tilde{M})^{\otimes -1}.
$$
\end{lemma}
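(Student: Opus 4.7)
The plan is to apply Cramer's rule to $\tilde{M}^{-1}$ and then use Lemma~\ref{detperm} twice — once for $\det(\tilde{M})$ and once for each cofactor — to translate each determinant into a max-plus permanent, recovering the definition \eqref{maxplusinv}.

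First I would establish invertibility. Expanding
\[
\det(\tilde{M})=\sum_{\pi\in\Pi(d)}\sgn(\pi)\prod_{k=1}^{d}\tilde{m}_{k\pi(k)},
\]
each summand is a Puiseux series with valuation $\sum_{k}\uc{m}_{k\pi(k)}$, and the largest such valuation is $\perm(\uc{M})$, achieved on the (square) set $\Psi\subseteq\Pi(d)$ of optimal assignments of $\uc{M}$. Collecting the terms of maximal valuation, the coefficient of the leading power of $z$ in $\det(\tilde{M})$ equals $\sum_{\pi\in\Psi}\sgn(\pi)\prod_{k=1}^{d}L(\tilde{m}_{k\pi(k)})$, which is nonzero by the defining condition of $\uc{G}(d,d)$ with $\phi=\mathrm{id}$. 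Hence $\det(\tilde{M})\neq 0$ in $\P$, so $\tilde{M}$ is invertible, and moreover $V\bigl(\det(\tilde{M})\bigr)=\perm(\uc{M})$, which is precisely Lemma~\ref{detperm} in this situation.

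Next, by Cramer's rule,
\[
(\tilde{M}^{-1})_{ij}=\frac{(-1)^{i+j}\det\bigl(\tilde{M}([j]^c,[i]^c)\bigr)}{\det(\tilde{M})}.
\]
Applying the multiplicative homomorphism property \eqref{timesplus} (and noting that signs have no effect on valuation), we obtain
\[
V\bigl((\tilde{M}^{-1})_{ij}\bigr)=V\bigl(\det(\tilde{M}([j]^c,[i]^c))\bigr)-V\bigl(\det(\tilde{M})\bigr).
\]
To rewrite the numerator as a permanent, I would apply Lemma~\ref{detperm} to the $(d-1)\times(d-1)$ minor $\tilde{M}([j]^c,[i]^c)$. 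This requires $L\bigl(\tilde{M}([j]^c,[i]^c)\bigr)\in\uc{G}(d-1,d-1)$; since $L$ is applied entrywise we have $L\bigl(\tilde{M}([j]^c,[i]^c)\bigr)=L(\tilde{M})([j]^c,[i]^c)$, and Lemma~\ref{submatrix} (applied to $L(\tilde{M})\in\uc{G}(d,d)$) gives exactly this membership. Thus $V\bigl(\det(\tilde{M}([j]^c,[i]^c))\bigr)=\perm\bigl(\uc{M}([j]^c,[i]^c)\bigr)$, and combining with the previous display yields
\[
V\bigl((\tilde{M}^{-1})_{ij}\bigr)=\perm\bigl(\uc{M}([j]^c,[i]^c)\bigr)-\perm(\uc{M})=(\uc{M}^{\otimes -1})_{ij}.
\]

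The only subtle point is the invertibility step: one must be sure that no accidental cancellation wipes out the coefficient of the leading term of $\det(\tilde{M})$. This is precisely what the $\uc{G}(d,d)$ hypothesis rules out, so the whole argument is essentially a direct composition of Cramer's rule with Lemmas~\ref{detperm} and~\ref{submatrix}, with no genuinely new work beyond unpacking the definition of $\uc{G}(d,d)$.
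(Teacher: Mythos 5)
Your proof is correct and takes essentially the same route as the paper: Cramer's rule, the valuation homomorphism \eqref{timesplus}, and then Lemmas~\ref{submatrix} and~\ref{detperm} to convert the determinants of $\tilde{M}$ and its minors into max-plus permanents. The only difference is that you unpack the invertibility/non-cancellation argument explicitly (which is really the content of Lemma~\ref{detperm} under the $\uc{G}(d,d)$ hypothesis) rather than just citing it, which is harmless.
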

\begin{proof}
From Cramer's rule we have
$$
(\tilde{M}^{-1})_{ij}=\det\big(\tilde{M}([j]^{c},[i]^{c})\big)/\det(\tilde{M}), \quad \hbox{for $i,j=1,\dots,d$}.
$$
Taking the valuation and using \eqref{timesplus} we have
$$
V(\tilde{M}^{-1})_{ij}=V\Big(\det\big(\tilde{M}([j]^{c},[i]^{c})\big)\Big)-V\big(\det(\tilde{M})\big), \quad \hbox{for $i,j=1,\dots,d$}.
$$
The result follows from a simple application of Lemma \ref{submatrix} and Lemma \ref{detperm}.
\end{proof}

The following Lemma is a technical result which we use in the proof of Theorem \ref{lemthm}.
\begin{lemma}\label{trickylemma}
Let $\uc{A}\in\Rmax\nbd$, let $\phi\in\oas(\uc{A})$ and let $j\in\{1,\dots,d\}$, then 
$$
\perm\big(\uc{A}([1,\dots,n],[j]^{c})\big)=\perm\big(\uc{A}([\phi(1),\dots,\phi(d)],[j]^{c})\big).
$$
\end{lemma}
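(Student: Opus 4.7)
The plan is to prove Lemma~\ref{trickylemma} by establishing both inequalities. The $\geq$ direction is immediate: since $\uc{A}([\phi(1),\dots,\phi(d)],[j]^c)$ is a submatrix of $\uc{A}([1,\dots,n],[j]^c)$, every injection $\{1,\dots,d\}\setminus\{j\}\to\{\phi(1),\dots,\phi(d)\}$ is a valid injection into $\{1,\dots,n\}$, so the RHS permanent is bounded above by the LHS permanent.

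For the reverse inequality, I plan to show that there exists an optimal assignment $\psi^{*}\in\oas\bigl(\uc{A}([1,\dots,n],[j]^c)\bigr)$ whose image is contained in $\{\phi(1),\dots,\phi(d)\}$; any such $\psi^{*}$ is feasible for the RHS permanent and therefore certifies the desired bound. My approach is to start with any $\psi\in\oas\bigl(\uc{A}([1,\dots,n],[j]^c)\bigr)$ and iteratively reduce by one the number of rows in its image that lie outside $\{\phi(1),\dots,\phi(d)\}$, while preserving its total weight $w(\psi):=\sum_{k\neq j}\lc{a}_{\psi(k),k}$. This is done via an alternating-path argument reminiscent of bipartite matching theory.

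Concretely, if $\psi(k_0)=i_0\notin\{\phi(1),\dots,\phi(d)\}$, I build the sequence $k_0,k_1,\dots,k_t$ defined recursively by $k_{t'+1}=\psi^{-1}(\phi(k_{t'}))$, terminating at the first index $t$ for which $\phi(k_t)$ is not in the image of $\psi$. The sequence is well defined: each $k_{t'}$ lies in $\{1,\dots,d\}\setminus\{j\}$ (since $j$ is not in the domain of $\psi$), the terms are distinct by the injectivity of $\phi$ and $\psi$, and $k_0$ cannot recur because $i_0\notin\{\phi(1),\dots,\phi(d)\}$; hence the process terminates in at most $d-1$ steps. I then define $\psi'$ by swapping along the sequence, namely $\psi'(k_{t'})=\phi(k_{t'})$ for $0\leq t'\leq t$ and $\psi'(k)=\psi(k)$ otherwise. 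A direct check confirms $\psi'$ is injective, and its image differs from that of $\psi$ only by exchanging $i_0$ for $\phi(k_t)\in\{\phi(1),\dots,\phi(d)\}$.

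To verify that $\psi'$ is still optimal, I consider the parallel modification $\phi''$ of $\phi$ performing the opposite swap, $\phi''(k_0)=i_0$ and $\phi''(k_{t'})=\phi(k_{t'-1})$ for $1\leq t'\leq t$, which is also injective. A direct computation shows $\bigl(w(\psi')-w(\psi)\bigr)+\bigl(w(\phi'')-w(\phi)\bigr)=0$, because the $\lc{a}_{\phi(k_{t'}),k_{t'}}$ terms newly introduced by one swap are exactly those removed by the other, and the boundary terms involving $\lc{a}_{i_0,k_0}$ and $\lc{a}_{\phi(k_0),k_0}$ cancel. Optimality of $\phi$ for the full matrix and $\psi$ for the column-deleted matrix forces both differences to be nonpositive; hence both are zero and $\psi'$ is still optimal. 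Iterating eventually produces the desired $\psi^{*}$. I expect the main technical obstacle to be the careful bookkeeping: verifying the injectivity of $\psi'$ and $\phi''$, and organizing the telescoping weight sums so that the two changes cancel exactly.
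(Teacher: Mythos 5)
Your proof is correct. The alternating sequence $k_0,k_1,\dots$ is well defined and terminates: a repeat $k_{t'+1}=k_s$ with $s\geq 1$ would give $\phi(k_{s-1})=\phi(k_{t'})$ and hence an earlier repeat, while $s=0$ would give $i_0=\phi(k_{t'})$, contradicting $i_0\notin\{\phi(1),\dots,\phi(d)\}$; the swapped maps $\psi'$ and $\phi''$ are injective precisely because $\phi(k_t)$ is not in the image of $\psi$ and $i_0$ is not in the image of $\phi$; and since $\phi''$ agrees with $\psi$ on the whole path, the cancellation $\bigl(w(\psi')-w(\psi)\bigr)+\bigl(w(\phi'')-w(\phi)\bigr)=0$ is immediate, so two-sided optimality forces $\psi'$ to remain optimal. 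This is, however, a genuinely different route from the paper's. The paper never introduces an optimal assignment $\psi$ of the column-deleted matrix; instead it repeatedly applies a row-expansion identity for the permanent (splitting over whether a given row is assigned, and to which column), following a path that starts at the deleted column $j$ and moves forward along $\phi$, and then evaluates the residual permanent of a complementary principal submatrix as the restriction of $\phi$; the endpoint of that recursion is directly an assignment supported on $\{\phi(1),\dots,\phi(d)\}$ whose weight equals the left-hand side. Your version is the classical matching-theoretic exchange argument: it needs no auxiliary permanent identities and yields the slightly sharper conclusion that some element of $\oas\bigl(\uc{A}([1,\dots,n],[j]^c)\bigr)$ has image contained in $\{\phi(1),\dots,\phi(d)\}$, at the cost of iterating the exchange up to $d-1$ times, whereas the paper's recursion produces its witness in a single pass and mirrors the expansion formulas used elsewhere in the paper.
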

\begin{proof}
First note that, since the LHS is the maximum over a set of assignments that includes all of the assignments in the RHS, we have
$$
\perm\big(\uc{A}([1,\dots,n],[j]^{c})\big)\geq \perm\big(\uc{A}([\phi(1),\dots,\phi(d)],[j]^{c})\big).
$$
To prove the reverse inequality we will need the following results (\ref{result1},\ref{result2}).

\indent {\bf i)} For $\uc{B}\in\Rmax^{m\times \ell}$, $\phi\in\oas(\uc{B})$ and $i\in\{1,\dots,m\}$, either $\phi$ does not assign now $i$, or it assigns row $i$ to some column $j\in\{1,\dots,\ell\}$. This yields
\begin{align}
\label{result1}
\perm(\uc{B})&=\perm\big(\uc{B}([i]^{c},[1,\dots,k])\big) \\ &\oplus \max_{j=1}^{\ell}\Big(\lc{b}_{ij}+\perm\big(\uc{B}([i]^{c},[j]^{c})\big)\Big). \nonumber
\end{align}
The expression on the first line of the RHS of \eqref{result1} is the maximum over all assignments that do not assign row $i$ and the expression on the second line is the maximum over $j\in\{1,\dots,\ell\}$, of the maximum over all assignments that assign row $i$ to column $j$.  $\square$

\indent {\bf ii)} For $\uc{B}\in\Rmax^{m\times \ell}$, $\phi\in\oas(\uc{B})$, we have
\begin{equation}\label{result2}
\perm\big(\uc{B}([\phi(j_1),\dots,\phi(j_k)]^c,[j_1,\dots,j_k]^c)\big)=\sum_{t \neq j_1,\dots,j_k}\lc{b}_{\phi(t)t}.
\end{equation}
First note that by restricting $\phi$ to the rows and columns of the submatrix on the LHS of \eqref{result2} we obtain an assignment with weight equal to the expression on the RHS, so that $LHS\geq RHS$. Now suppose that $LHS>RHS$, then there exists an assignment $\phi'$ of the submatrix with weight strictly greater than that of $\phi$. But we can extend $\phi'$ to an assignment of the full matrix $\uc{B}$ by assigning $j_t$ to $\phi(j_t)$ for $t=1,\dots,k$. This results in an assignment of $\uc{B}$ with weight strictly greater than that of $\phi$, which is a contradiction. $\square$

We construct a sequence $j_1,\dots,j_k$, as follows. Set $j_1=j$, as in the statement of the Lemma, then from \eqref{result1} we have
\begin{align*}
\perm &\big(\uc{A}([1,\dots,n],[j_1]^{c})\big)= \perm\big(\uc{A}([\phi(j_1)]^{c},[j_1 ]^{c})\big) \\ & \oplus\max_{t\neq j_1}\Big(\lc{a}_{\phi(j_1) t}+ \perm\big(\uc{A}([\phi(j_1)]^{c},[j_1,t]^{c})\big)\Big).
\end{align*}
If the expression in the first line of the RHS attains the maximum we stop, otherwise we set $j_2$ to be a value of $t$ that attains the maximum in the second line of the RHS. After $k-1$ steps we have $j_1,\dots,j_k$, a sequence of distinct elements of $\{1,\dots,d\}$. From \eqref{result1} we have 
\begin{align*}
\perm &\big(\uc{A}([\phi(j_1),\dots,\phi(j_{k-1})]^{c},[j_1,\dots,j_k]^{c})\big)= \perm\big(\uc{A}([\phi(j_1),\dots,\phi(j_{k})]^{c},[j_1,\dots,j_k]^{c})\big) \\ & \oplus \max_{t\neq j_1,\dots,j_k}\Big(\lc{a}_{\phi(j_k) t}+ \perm\big(\uc{A}([\phi(j_1),\dots,\phi(j_{k-1})]^{c},[j_1,\dots,j_k,t]^{c})\big)\Big).
\end{align*}
If the expression in the first line of the RHS attains the maximum we stop, otherwise we set $j_{k+1}$ to be a value of $t$ that attains the maximum in the second line. Continuing in this way we either generate a sequence of length $d$, in which case
\begin{equation}\label{stop1}
\perm\big(\uc{A}([1,\dots,n],[j_1]^c)\big)=\sum_{t=1}^{d-1} \lc{a}_{\phi(j_{t}) j_{t+1}},
\end{equation}
or we stop after $k<d$ steps, in which case
\begin{equation}\label{stop2}
\perm\big(\uc{A}([1,\dots,n],[j_1]^c)\big)=\sum_{t=1}^{k-1} \lc{a}_{\phi(j_{t}) j_{t+1}}+\perm\big(\uc{A}([\phi(j_1),\dots,\phi(j_k)]^c,[j_1,\dots,j_k]^c)\big).
\end{equation}
The expression in the RHS of \eqref{stop1} is the weight of an assignment of $\uc{A}$ that only assigns the rows $\{\phi(1),\dots,\phi(d-1)\}$ and does not assign the column $j_1=j$, so that 
$$
\perm\big(\uc{A}([1,\dots,n],[j_1]^c)\big)=\sum_{t=1}^{d} \lc{a}_{\phi(j_{t}) j_{t+1}}\leq \perm\big(\uc{A}([\phi(1),\dots,\phi(d)],[j_{1}]^{c})\big).
$$
Applying result \eqref{result2} to \eqref{stop2} yields 
$$
\perm\big(\uc{A}([1,\dots,n],[j_1]^c)\big) =\sum_{t=1}^{k-1} \lc{a}_{\phi(j_{t}) j_{t+1}} + \sum_{t=k+1}^d \lc{a}_{\phi(j_t)j_t}.
$$
The expression in the RHS is the weight of an assignment of $\uc{A}$ that only assigns the rows $\{\phi(1),\dots,\phi(d)\}/\phi(k)$ and does not assign the column $j$, so that 
\begin{align*}
\perm\big(\uc{A}([1,\dots,n],[j_1]^c)\big) &=\sum_{t=1}^{k-1} \lc{a}_{\phi(j_{t}) j_{t+1}} + \sum_{t=k+1}^d \lc{a}_{\phi(j_t)j_t} \\ &\leq \perm\big(\uc{A}([\phi(1),\dots,\phi(d)],[j]^{c})\big).
\end{align*}

\end{proof}

\begin{theorem}\label{mpthm}Let $\uc{A}\in\Rmax\nbd$ and without loss of generality assume that $(1,2,\dots,d)\in\oas(\uc{A})$ and set $\uc{M}=\uc{A}([1,\dots,d],[1,\dots,d])$, then
$$
\lc{p}_{i}(\uc{A})=\left\{\begin{array}{cc} 0 & \hbox{for $i=1,\dots,d$}, \\ 2\big(\uc{A}\otimes  \uc{M}^{\otimes -1}\otimes \underline{0}\big)_{i} & \hbox{for $i=d+1,\dots,n$}. \end{array}\right.
$$

\end{theorem}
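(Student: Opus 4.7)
The plan is to handle the two cases of the piecewise formula separately, noting first that $\perm(\uc{A})=\perm(\uc{M})$, since the hypothesis $(1,2,\dots,d)\in\oas(\uc{A})$ means the optimal assignment lives entirely in rows $1,\dots,d$. For $i\in\{1,\dots,d\}$ that same identity assignment already assigns row $i$, so $\perm(\uc{A},i)=\perm(\uc{A})$ and therefore $\lc{p}_i(\uc{A})=0$, giving the first line of the formula immediately.

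For $i\in\{d+1,\dots,n\}$ the strategy is to expand both sides into a common form involving permanents of $(d-1)\times(d-1)$ submatrices of $\uc{M}$ and then match them. Splitting the $i$-obligated permanent according to which column $j$ row $i$ is assigned to gives
\begin{equation*}
\perm(\uc{A},i)=\max_{j=1}^{d}\Bigl[\lc{a}_{ij}+\perm\bigl(\uc{A}([i]^c,[j]^c)\bigr)\Bigr].
\end{equation*}
Since $i>d$ the rows $1,\dots,d$ all survive in $\uc{A}([i]^c,[1,\dots,d])$ and $(1,\dots,d)$ remains an optimal assignment of this sub-matrix (any assignment of the sub-matrix is also an assignment of $\uc{A}$ of the same weight). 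Applying Lemma~\ref{trickylemma} to $\uc{A}([i]^c,[1,\dots,d])$ with $\phi=(1,\dots,d)$ and column $j$ then collapses $\perm\bigl(\uc{A}([i]^c,[j]^c)\bigr)$ to $\perm\bigl(\uc{M}([1,\dots,d],[j]^c)\bigr)$.

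On the other side, unfolding the max-plus matrix product with the definition \eqref{maxplusinv} yields
\begin{equation*}
\bigl(\uc{A}\otimes \uc{M}^{\otimes -1}\otimes \underline{0}\bigr)_i=\max_{j,k=1}^{d}\Bigl[\lc{a}_{ij}+\perm\bigl(\uc{M}([k]^c,[j]^c)\bigr)\Bigr]-\perm(\uc{M}).
\end{equation*}
The inner maximum over $k$ of $\perm\bigl(\uc{M}([k]^c,[j]^c)\bigr)$ equals $\perm\bigl(\uc{M}([1,\dots,d],[j]^c)\bigr)$, because any injection from $\{1,\dots,d-1\}$ into $\{1,\dots,d\}$ misses exactly one row. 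Substituting and using $\perm(\uc{A})=\perm(\uc{M})$ then gives $\lc{p}_i(\uc{A})=2\bigl(\uc{A}\otimes \uc{M}^{\otimes -1}\otimes \underline{0}\bigr)_i$, as required.

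The main obstacle is the reduction step that invokes Lemma~\ref{trickylemma}: it is crucial that $i>d$, so that deleting row $i$ does not touch the distinguished rows of the optimal assignment $(1,\dots,d)$, and the lemma can be applied to the row-reduced $(n-1)\times d$ matrix. The remaining work is routine bookkeeping to align the two unfolded expressions and to verify the elementary identity $\max_{k}\perm\bigl(\uc{M}([k]^c,[j]^c)\bigr)=\perm\bigl(\uc{M}([1,\dots,d],[j]^c)\bigr)$.
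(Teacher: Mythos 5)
Your proposal is correct and follows essentially the same route as the paper's proof: split $\perm(\uc{A},i)$ over the column assigned to row $i$, use Lemma~\ref{trickylemma} on the row-reduced matrix to collapse $\perm\bigl(\uc{A}([i]^c,[j]^c)\bigr)$ to $\perm\bigl(\uc{M}([1,\dots,d],[j]^c)\bigr)$, and then identify the latter with $\max_{k}\perm\bigl(\uc{M}([k]^c,[j]^c)\bigr)$ to match the unfolded max-plus product $\uc{A}\otimes\uc{M}^{\otimes-1}\otimes\underline{0}$. The only difference is presentational (you expand both sides and match, while the paper substitutes step by step), and your bookkeeping of the factor of $2$ and of $\perm(\uc{A})=\perm(\uc{M})$ is in fact cleaner than the paper's own display.
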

\begin{proof}
 First note that if $(1,2,\dots,d)$ is not an optimal assignment of $\uc{A}$ then we can permute the rows of $\uc{A}$ so that it is. Permuting the rows will permute the statistical leverage scores in the same way. For the assigned rows $i=1,\dots,d$, we have $\lc{p}_{i}(\uc{A})=0$, which matches the definition of $\lc{p}(\uc{A})$ given in \eqref{mpslsdef2}. For the remaining rows $i=d+1,\dots,n,$ recall that
\begin{equation}\label{def2}
\lc{p}_{i}(\uc{A})=2\big(\perm(\uc{A},i)-\perm(\uc{A})\big),
\end{equation}
where that the $i$-obligated permanent $\perm(\uc{A},i)$ is the weight of the maximally weighted assignment that assigns row $i$. Such an assignment must assign row $i$ to some column $k\in\{1,\dots,d\}$. Taking the maximum over the column assigned to $i$ yields
\begin{equation}\label{mm1}
\perm(\uc{A},i)=\max_{k=1}^{d}\Big(\lc{a}_{ik}+\perm\big(\uc{A}([k]^c,[i]^c\big)\big)\Big).
\end{equation}
Since $\phi=(1,\dots,n)$ is an optimal assignment of $\uc{A}$ it is also an optimal assignment of $\uc{A}([i]^c,[1,\dots,d])$, so from Lemma \ref{trickylemma} we have
$$
\perm\big(\uc{A}([i]^c,[k]^c\big)\big)=\perm(\uc{A}([1,\dots,d],[k]^c )\big)=\perm(\uc{M}([1,\dots,d],[k]^c)\big).
$$
The expression on the RHS is the permanent of a matrix with $d$ rows and $d-1$ columns, so any optimal assignment will have to leave one row $j\in\{1,\dots,d\}$ unassigned. Taking the maximum over the unassigned row gives
$$
\perm\big(\uc{M}([1,\dots,d],[k])\big)=\max_{j=1}^{d}\Big(\perm\big(\uc{M}([j]^c,[k]^c)\big)\Big).
$$
Substituting this expression back into \eqref{mm1} then \eqref{def2} and also using the fact that $\perm(\uc{A})=\perm(\uc{M})$, we have
\begin{align*}
\lc{p}_{i}(\uc{A})&=2\max_{k=1}^{d}\left(\lc{a}_{ik}+\max_{j=1}^{d}\Big(\perm\big(\uc{M}([j]^c,[k]^c)\big)\Big)\right)-\perm(\uc{M}) \\ &= \max_{j=1}^{d} 2\big(\uc{A}\otimes  \uc{M}^{\otimes -1}\big)_{ij} \\ &=\big(\uc{A}\otimes  \uc{M}^{\otimes -1}\otimes{0}\big)_{i}.
\end{align*}
\end{proof}

\begin{theorem}\label{lemthm}Let $\tilde{A}\in\P^{n\times d}$ and suppose that $\uc{L}(\tilde{A})\in\uc{G}(n,d)$, without loss of generality assume that $(1,2,\dots,d)\in\oas\big(V(\tilde{A})\big)$ and set $\tilde{M}=\tilde{A}([1,\dots,d],[1,\dots,d])$, then

\indent {\bf a)} For $j=1,\dots,d$
$$
V(\tilde{A}\tilde{M}^{-1})_{ij}=\left\{\begin{array}{cc} \delta_{ij} & \hbox{for $i=1,\dots,d$}, \\ \big(V(\tilde{A})\otimes  V(\tilde{M})^{\otimes -1}\big)_{ij} & \hbox{for $i=d+1,\dots,n$}. \end{array}\right.
$$

%
%$$
%V(\tilde{A}\tilde{M}^{-1})_{ij}=\big(V(\tilde{A})\otimes  V(\tilde{M})^{\otimes -1}\big)_{ij}, \quad \hbox{for $i=d+1,\dots,n$ and $j=1,\dots,d$.}
%$$

%\indent {\bf b)}
%$$
%\lc{p}\big(V(\tilde{A})\big)=V(\tilde{A}\tilde{M}^{-1})\otimes \underline{0},
%$$
%where $\underline{0}$ is a length $d$ vector of zeros.

\indent {\bf b)}
$$
\lim_{z\rightarrow\infty}\frac{\log p_{i}\big(\tilde{A}(z)\big)}{\log |z|}=\lc{p}_{i}(\uc{A}), \quad \hbox{for $i=1,\dots,n$}.
$$

\end{theorem}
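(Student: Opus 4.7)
The plan is to prove part (a) first by block decomposition of $\tilde{A}\tilde{M}^{-1}$, then use part (a) together with a leading-order asymptotic analysis to establish part (b). Throughout, the role of the genericity hypothesis $L(\tilde{A}) \in \uc{G}(n,d)$ is exactly to rule out cancellation of leading-order terms in the various sums that appear.

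For part (a), I would write $\tilde{A}$ as the top block $\tilde{M} = \tilde{A}([1,\dots,d],[1,\dots,d])$ stacked over $\tilde{B} = \tilde{A}([d+1,\dots,n],[1,\dots,d])$, so that $\tilde{A}\tilde{M}^{-1}$ is $I_d$ stacked over $\tilde{B}\tilde{M}^{-1}$. The top block immediately yields $V(\tilde{A}\tilde{M}^{-1})_{ij} = \delta_{ij}$ for $i \le d$, identifying $V(1)=0$ and $V(0)=-\infty$ with the max-plus Kronecker pattern. For the bottom block and each fixed $(i,j)$, Cramer's rule writes $(\tilde{B}\tilde{M}^{-1})_{ij} = \det(\tilde{M}^{(i,j)})/\det(\tilde{M})$, where $\tilde{M}^{(i,j)}$ denotes $\tilde{M}$ with its $j$-th row replaced by $\tilde{B}_{i\cdot}$. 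Since $\tilde{M}^{(i,j)}$ and $\tilde{M}$ are $d\times d$ submatrices of $\tilde{A}$ up to row permutation, Lemma~\ref{submatrix} gives $L(\tilde{M}^{(i,j)}), L(\tilde{M}) \in \uc{G}(d,d)$, and Lemma~\ref{detperm} then yields $V(\tilde{B}\tilde{M}^{-1})_{ij} = \perm(V(\tilde{M}^{(i,j)})) - \perm(V(\tilde{M}))$. Expanding the first permanent along the replaced row gives $\max_k\bigl(V(\tilde{A})_{ik} + \perm(V(\tilde{M})([j]^c,[k]^c))\bigr)$, which by the formula \eqref{maxplusinv} for $V(\tilde{M})^{\otimes -1}$ is precisely $(V(\tilde{A})\otimes V(\tilde{M})^{\otimes -1})_{ij}$.

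For part (b), I would use that $\tilde{M}(z)$ is invertible away from finitely many $z \in \C$, so the leverage scores of $\tilde{A}(z)$ agree with those of $S(z) := \tilde{A}(z)\tilde{M}(z)^{-1}$, which has $I_d$ on top of $T(z) := \tilde{B}(z)\tilde{M}(z)^{-1}$. The projector formula $p_i(S) = (S(S^*S)^{-1}S^*)_{ii}$ together with $S^*S = I_d + T^*T$ gives $p_i(S) = \bigl((I_d + T^*T)^{-1}\bigr)_{ii}$ for $i\le d$, and $p_i(S) = T_{j\cdot}(I_d + T^*T)^{-1}T_{j\cdot}^*$ for $i = d+j > d$. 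From part (a) combined with Theorem~\ref{mpthm}, every entry of $T$ has non-positive valuation and $\max_k V(T_{jk}) = \tfrac{1}{2}\lc{p}_{d+j}(\uc{A})$. In the appropriate $|z|\to\infty$ limit dual to the valuation, each entry $T_{jk}(z)$ behaves like $c_{jk}|z|^{V(T_{jk})}$ with $c_{jk}=L(T_{jk})\neq 0$, so the entries of $T(z)$ stay bounded, $(I_d + T^*T)^{-1}$ tends to a positive definite limit, $p_i(S(z))$ tends to a positive constant for $i\le d$, and $p_{d+j}(S(z)) \asymp |z|^{\lc{p}_{d+j}(\uc{A})}$ for $i>d$. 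Taking $\log(\cdot)/\log|z|$ then yields the claimed limits.

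The main obstacle is cancellation control. In part (a) the genericity hypothesis pushes every cancellation issue into $d\times d$ applications of Lemma~\ref{detperm}, which are protected by Lemma~\ref{submatrix}, so nothing extra is needed. In part (b) a subtler point arises: each individual entry of $T(z)$ has a nonvanishing leading coefficient by genericity, but their combination inside the quadratic form $T_{j\cdot}(I_d + T^*T)^{-1}T_{j\cdot}^*$ could in principle collapse. I expect to resolve this by rewriting the leading coefficient of $p_{d+j}(S(z))$ as yet another determinantal expression in the entries of $L(\tilde{A})$, so that $L(\tilde{A}) \in \uc{G}(n,d)$ again rules out the cancellation; carrying out this leading-coefficient identification is the main technical step.
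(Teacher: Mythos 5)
Your part (a) is essentially the paper's own argument: Cramer's rule applied to the row-replaced matrix $\tilde{M}(\tilde{A}_{i\cdot},j)$, Lemmas~\ref{submatrix} and~\ref{detperm} to convert the valuations of the two determinants into permanents, and an expansion of the permanent along the replaced row to recover the max-plus inverse formula \eqref{maxplusinv}. Part (b), however, takes a genuinely different route. The paper works directly from the variational definition \eqref{slsdef1} applied to $\tilde{R}=\tilde{A}\tilde{M}^{-1}$: the identity block gives $\max_j|x_j|\leq\|\tilde{R}(z)x\|_2$, which yields the upper bound $2\max_j V(\tilde{r}_{ij})$ for the growth exponent, and the single test vector $x=\underline{e}_{\ell}$ with $\ell=\arg\max_j V(\tilde{r}_{ij})$ gives the matching lower bound; Theorem~\ref{mpthm} then identifies $2\max_j V(\tilde{r}_{ij})$ with $\lc{p}_{i}(\uc{A})$. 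Your route through the projector formula $p_i(S)=\big(S(S^{*}S)^{-1}S^{*}\big)_{ii}$ with $S^{*}S=I_d+T^{*}T$ is more explicit, handles the rows $i\leq d$ and $i>d$ uniformly, and avoids the paper's two-sided test-vector argument, but it obliges you to control a full quadratic form rather than a single entry.

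That control is the one genuine gap, and the resolution you sketch is the wrong one. You propose to rule out collapse of $T_{j\cdot}(I_d+T^{*}T)^{-1}T_{j\cdot}^{*}$ by identifying its leading coefficient as a determinantal expression in $L(\tilde{A})$ and invoking $L(\tilde{A})\in\uc{G}(n,d)$; but the leverage scores involve complex conjugates, are not Puiseux series, and there is no reason their leading coefficients should land in the finite list of polynomial non-vanishing conditions defining $\uc{G}(n,d)$, so this step is not obviously closable as stated. Fortunately it is also unnecessary: since part (a) shows every entry of $T$ has non-positive valuation, $\|T(z)\|_2$ stays bounded in the relevant limit, hence the spectrum of $I_d+T(z)^{*}T(z)$ lies in $[1,\,1+\|T(z)\|_2^2]\subseteq[1,C]$ and therefore $\|T_{j\cdot}(z)\|_2^2/C\leq p_{d+j}\big(S(z)\big)\leq\|T_{j\cdot}(z)\|_2^2$, while $\big((I_d+T^{*}T)^{-1}\big)_{ii}\in[1/C,1]$ for $i\leq d$. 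The theorem only asserts the growth exponent, and these two-sided bounds pin it to $2\max_k V(T_{jk})=\lc{p}_{d+j}(\uc{A})$ (respectively to $0$ for $i\leq d$) with no further use of genericity. With that elementary sandwich substituted for your deferred step, your argument is complete.
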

\begin{proof} {\bf a)} From Lemma \ref{submatrix} and Lemma \ref{detperm} we have that $\tilde{M}$ is invertible. Therefore $\tilde{A}\tilde{M}^{-1}$ is a matrix whose first $d$ rows form the $d\times d$ identity matrix and whose remaining entries can be expressed using Cramer's rule
$$
(\tilde{A}\tilde{M}^{-1})_{ij}=(\tilde{A}_{i\cdot}\tilde{M}^{-1})_{j}=\det\big(\tilde{M}(\tilde{A}_{i\cdot},j)\big)/\det(\tilde{M}), \quad \hbox{for $i=d+1,\dots,n$, $j=1,\dots,d$},
$$
where $\tilde{M}(\tilde{A}_{i\cdot},j)$ is the matrix obtained by replacing the $j$th row of $\tilde{M}$ with the $i$th row of $\tilde{A}$. Using Lemma \ref{submatrix} and Lemma \ref{detperm} we have 
$$
V(\tilde{A}\tilde{M}^{-1})_{ij}=\perm\Big( V\big(\tilde{M}(\tilde{A}_{i\cdot},j)\big)\Big)-\perm\big(V(\tilde{M})\big).
$$
We can expand the permanent of a $d\times d$ max-plus matrix along a row just like in the classical case of a determinant. Expanding along the $j$th row of $V\big(\tilde{M}(\tilde{A}_{i\cdot},j)\big)$ results in
$$
\perm\Big( V\big(\tilde{M}(\tilde{A}_{i\cdot},j)\big)\Big)=\max_{k=1}^{d}\perm\Big(V\big( \tilde{M}([1,\dots,d]/k,[1,\dots,d]/j)\big)\Big)+V(\tilde{A}_{ik}),
$$
which yields
$$
V(\tilde{A}\tilde{M}^{-1})_{ij}=\big(V(\tilde{A})\otimes  V(\tilde{M})^{\otimes -1}\big)_{ij}, \quad \hbox{for $i=d+1,\dots,n$ and $j=1,\dots,d$.} ~\square
$$

\indent {\bf b)} From Lemma \ref{submatrix} and Lemma \ref{detperm} we have that $\tilde{M}$ is invertible so that the statistical leverage scores of $\tilde{A}$ and $\tilde{R}=\tilde{A}\tilde{M}^{-1}$ are equal. Note that since the first $d$ rows of $\tilde{R}$ form the identity matrix we have
\begin{equation}\label{pp1}
 \max_{j=1}^{d}|x_{j}|\leq \|\tilde{R}(z)x \|_{2}, \quad \hbox{for all $z\in \C$, $x\in\C^d$}.
\end{equation}
Also note that 
\begin{equation}\label{pp2}
|(\tilde{R}(z)x)_i  |\leq  d \big(\max_{j=1}^{d}|\tilde{r}_{ij}(z)|\big) \max_{j=1}^{d}|x_{j}|, \quad \hbox{for all $z\in \C$, $x\in\C^d$, $i=1,\dots,n$}.
\end{equation}
Therefore
\begin{align*}
\nonumber \lim_{z\rightarrow 0}\frac{-\log p_{i}\big(\tilde{A}(z)\big)}{\log |z|} &=\lim_{z\rightarrow 0}\frac{-1}{\log |z|}\log\left(\max_{x\in\C^d}\frac{|(\tilde{R}(z)x)_{i}|}{\|\tilde{R}(z)x\|_{2}}\right)^{2} \\ \nonumber &=\lim_{z\rightarrow 0}\frac{-2}{\log |z|} \max_{x\in\C^d}\left({\log |\big(\tilde{R}(z)x\big)_{i} |}-{\log\|\tilde{R}(z)x \|_{2}}\right) \\ \nonumber & \leq \lim_{z\rightarrow 0}\frac{-2}{\log |z|} \max_{x\in\C^d}\left({\log \Big(d \big(\max_{j=1}^{d}|\tilde{r}_{ij}(z)|\big) \max_{j=1}^{d}|x_{j}|\Big)}-{\log\big(\max_{j=1}^{d}|x_{j}|\big)}\right) \\  &= \lim_{z\rightarrow 0}\frac{-2\max_{j=1}^{d}|\tilde{r}_{ij}(z)| }{\log |z|}=2\max_{j=1}^{d}V(\tilde{r}_{ij})=\lc{p}(\uc{A})_{i},
\end{align*}
where we used the results of part {\bf a} and Theorem~\ref{mpthm} in the last line. Next we will prove the reverse inequality. Setting $x=\underline{e}_{\ell}$, where $\ell=\arg\max_{j=1}^d V(\tilde{r}_{ij})$, yields
\begin{align}\label{ineq2}
\nonumber \lim_{z\rightarrow 0}\frac{-\log p_{i}\big(\tilde{A}(z)\big)}{\log |z|} &\nonumber \geq\lim_{z\rightarrow 0}\frac{-1}{\log |z|}\log\left(\frac{|(\tilde{R}(z)\underline{e}_{\ell})_{i}|}{\|\tilde{R}(z)\underline{e}_{\ell}\|_{2}}\right)^{2} \\ &\nonumber =\lim_{z\rightarrow 0}\frac{-2\log |\big(\tilde{R}(z)\underline{e}_{\ell}\big)_{i} |}{\log|z|}-\frac{-2\log\|\tilde{R}(z)\underline{e}_{\ell} \|_2 }{\log|z|} \\ &= 2V(\tilde{r}_{i\ell})-\lim_{z\rightarrow 0}\frac{-2\log\|\tilde{R}(z)\underline{e}_{\ell} \|_2 }{\log|z|}.
\end{align}
Now
$$
\|\tilde{R}(z)\underline{e}_{\ell}\|_{2}\leq \sqrt{n}\max_{k=1}^{n}|\tilde{r}(z)_{kl}|  , \quad \hbox{for all $z\in \C$, $x\in\C^d$},
$$
so that
\begin{equation}\label{xj}
\lim_{z\rightarrow 0}\frac{-\log\|\tilde{R}(z)\underline{e}_{\ell} \|_2 }{\log|z|}\leq \lim_{z\rightarrow 0} \max_{k=1}^{n}\frac{-\log |\tilde{r}(z)_{kl}|}{\log |z|}=  \max_{k=1}^{n}V(\tilde{r}_{k\ell})=0,
\end{equation}
where we used the fact that $V(\tilde{b}_{k\ell})\leq \lc{p}(\uc{A})_{k}\leq 0$ and that $V(\tilde{b}_{\ell\ell})=0$. Finally substituting \eqref{xj} into \eqref{ineq2} and noting from the results of part {\bf a} and Theorem~\ref{mpthm}, that $2V(\tilde{b}_{i\ell})=\lc{p}(\uc{A})_{i}$, results in
$$
 \lim_{z\rightarrow 0}\frac{-\log p_{i}\big(\tilde{A}(z)\big)}{\log |z|}\geq \lc{p}(\uc{A}_{i}).
 $$
\end{proof}

\begin{example}
Consider
$$
\tilde{A}=\left[\begin{array}{cc} z^{-3} & z^{-3} \\ 1 & z^{-2} \\ z^{-1} & 1 \end{array}\right], \quad \uc{A}=V(\uc{A})=\left[\begin{array}{cc} 3 & 3 \\ 0 & 2 \\ 1 & 0 \end{array}\right], \quad L(\tilde{A})=\left[\begin{array}{cc} 1 & 1 \\ 1 & 1 \\ 1 & 1 \end{array}\right].
$$
We have chosen $\tilde{A}$ so that $\tilde{A}(0.1)=A$, where $A$ is the fixed complex matrix of Example \ref{goodegg1}. Note that we do not have $L(\tilde{A})\in\uc{G}(3,2)$, so that this matrix does not actually satisfy the condition of Theorem~\ref{lemthm}. Although we will see that this is not a problem for this particular example. To obtain matrices which satisfy the condition we randomly generate two more matrices of Puiseux series with the same valuation as $\tilde{A}$ but with independent Gaussian distributed leading order term coefficients. We plot the behavior of the statistical leverage scores of $\tilde{A}(z)$ as a function of $z$ as well as the scores of two of the randomly generated matrices. See Figure \ref{limplot}. Observe that the limits of the randomly signed matrices converge to $\lc{p}(\uc{A})=[0,0,-2]$ as required. In this example the condition $L(\tilde{A})\in\uc{G}(3,2)$ is not necessary and the scores of the matrix $\tilde{A}(z)$ also converge to the max-plus scores. We have highlighted the scores of $\uc{A}(10^{-k})$ for $k=1,2,3$. Note that  $\uc{A}(10^{-k})$ is equal to the elementwise $k$th power of $A$.

 \begin{figure}
\begin{center}
\subfigure[]{\includegraphics[scale=0.4]{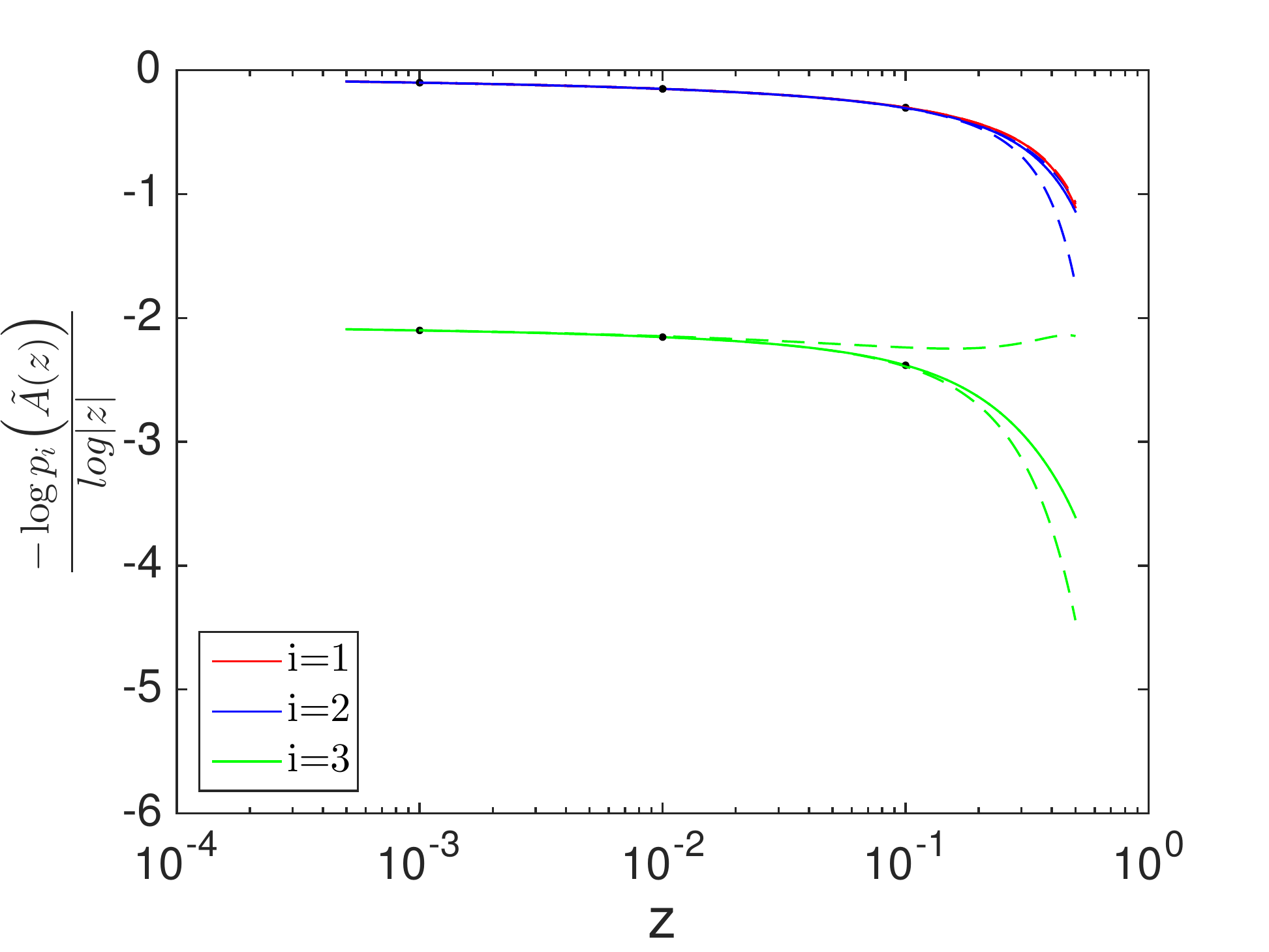}}\hspace{.4cm}
\subfigure[]{\includegraphics[scale=0.4]{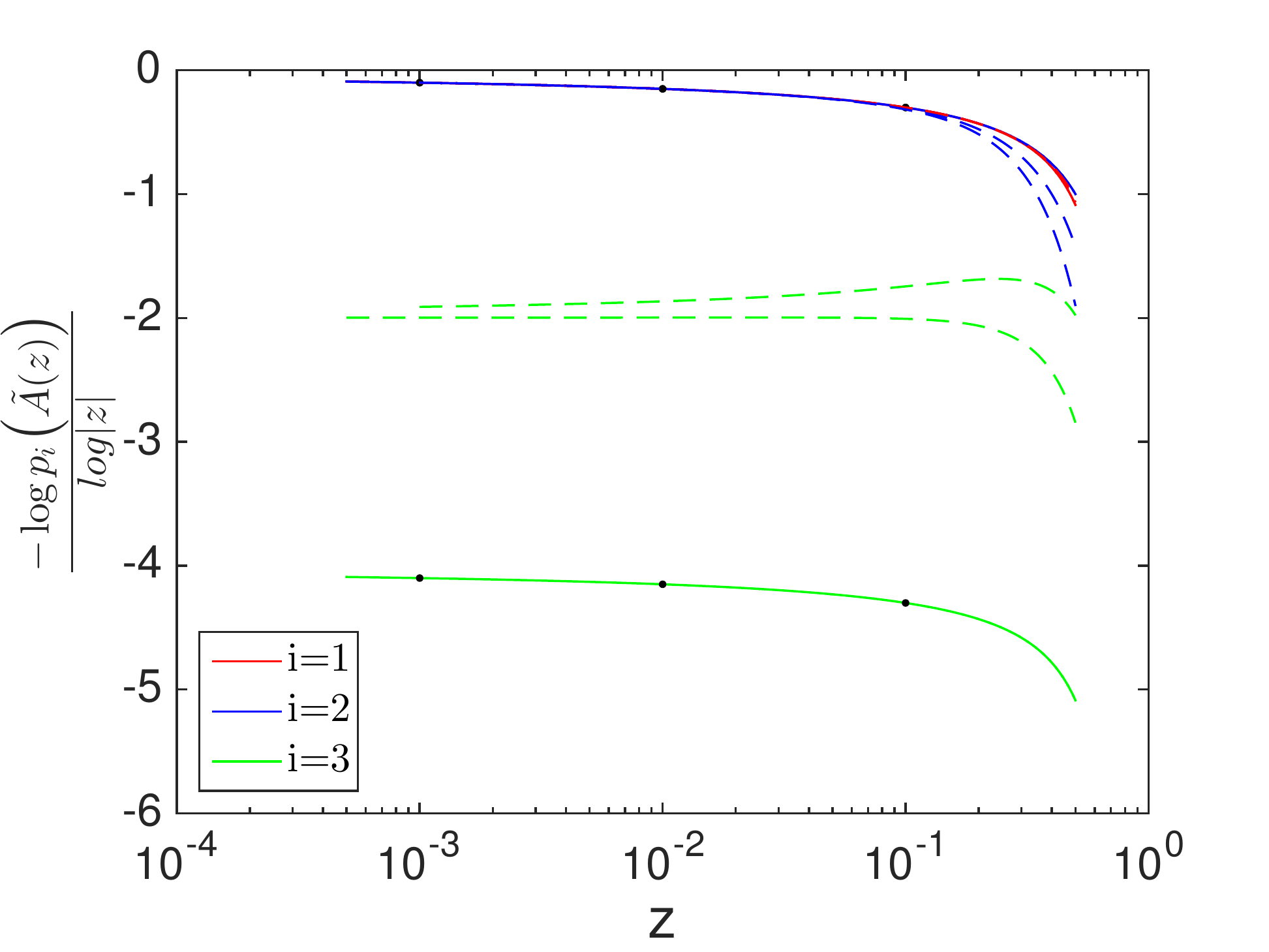}}\hspace{.4cm}
\caption{Convergence of $\frac{-\log p_{i}\big(\tilde{A}(z)\big)}{\log |z|}$, with scores of randomly generated matrices marked by dashed lines. a) Example \ref{goodegg1}, b) Example \ref{badegg1}.}
\label{limplot}
\end{center}
\end{figure}

\end{example}

\begin{example}
Consider
$$
\tilde{A}=\left[\begin{array}{cc} z^{-3} & z^{-3} \\ 1 & z^{-2} \\ z^{-1} & z^{-1} \end{array}\right], \quad \uc{A}=V(\uc{A})=\left[\begin{array}{cc} 3 & 3 \\ 0 & 2 \\ 1 & 1 \end{array}\right], \quad L(\tilde{A})=\left[\begin{array}{cc} 1 & 1 \\ 1 & 1 \\ 1 & 1 \end{array}\right].
$$
Now we have chosen $\tilde{A}$ so that $\tilde{A}(0.1)=A$, where $A$ is the fixed complex matrix of Example \ref{badegg1}. We make the same plot as before. See Figure \ref{limplot}.  As before the scores of the randomly signed matrices converge to the max-plus limits as required by  Theorem~\ref{lemthm}. However the $3$rd score of $\tilde{A}(z)$ converges to a different limit. In this second example the condition $L(\uc{A})\in\uc{G}(3,2)$ is still not strictly necessary, but the condition  $l_{11}l_{32}-l_{22}l_{31}\neq 0$ is, and it is not not satisfied by $\tilde{A}$, which is why its limits disagree with the max-plus statistical leverage scores. 

\end{example}

\section{Max-plus statistical leverage score algorithm}\label{asection}

Theorem \ref{mpthm} shows us how to calculate the max-plus statistical leverage scores $\lc{p}(\uc{A})$ for $\uc{A}\in\Rmax\nbd$. First of all we compute an optimal assignment $\phi\in\oas(\uc{A})$. Next we set $\uc{M}=\uc{A}([\phi(1),\dots,\phi(d)],[1,\dots,d])$ and compute $\uc{M}^{\otimes -1}$, then
\begin{equation}\label{algp}
\lc{p}_{i}(\uc{A})=\left\{\begin{array}{cc} 0 & \hbox{for $i$ is assigned by $\phi$}, \\ 2\big(\uc{A}\otimes  \uc{M}^{\otimes -1}\otimes \underline{0}\big)_{i} & \hbox{otherwise}. \end{array}\right.
\end{equation}
See Algorithm~\ref{alg:MPSLS}. We treat the computations of the optimal assignment and max-plus inverse separately below. The multiplication on line 4 has cost $\uc{O}(d^2)$. For $i=1,\dots,n$, setting $\lc{p}_{i}(\uc{A})$ has cost $\uc{O}(d)$, so that the total cost of setting $\lc{p}(\uc{A})$ is $\uc{O}(nd)$. Note that each row can be treated independently in parallel and that if $\uc{A}$ is a sparse matrix\footnote{A sparse max-plus matrix is one with many entries equal to minus infinity. If $A\in\C\nbd$ is a conventional sparse matrix then $\log |A|\in\Rmax\nbd$ is a sparse max-plus matrix.} then the total cost of setting $\lc{p}(\uc{A})$ is $\uc{O}(\tau)$, where $\tau$ is the number of finite entries in $\uc{A}$.

To compute an optimal assignment $\phi\in\oas(\uc{A})$ we can use the Hungarian algorithm \cite{Munkres1957}, the Successive Shortest Paths algorithm \cite{orle93} or the Auction algorithm \cite{Bertsekas1989}. Applied directly to $\uc{A}\in\Rmax\nbd$ all of these algorithms have cost $\uc{O}(nd^{2})$. However we can reduce this cost considerably by noting that the optimal assignment of a tall skinny $n\times d$ matrix only depends on the $d$ largest entries in each row. For each row of $\uc{A}$ we select the $d$ largest entries, which we then sort in decreasing order. This results in a sparse matrix with at most $d$ entries per column and a known sorting order for each column. We then pass this matrix to the Successive Shortest Paths algorithm, which is able to compute the optimal assignment with cost $\uc{O}(d^3)$. An efficient implementation which exploits the fact that the $d$ largest entries in each column have been sorted is essential to achieve this lower cost. To select the $d$ largest entries in each column we use Quickselect \cite{Hoare1961}. Like Quicksort this algorithm has poor worst case cost but good average case cost. For Quickselect to find the $d$ largest entries in a column of length $d$ has worst case cost $\uc{O}(n^2)$ but average case cost $\uc{O}(n)$. We then sort the $d$ largest entries from each column using Quicksort with average case cost $\uc{O}\big(d\log(d)\big)$. Using this approach the total average case cost of computing the optimal assignment $\phi$ is $\uc{O}(nd+d^3)$. Clearly each column can be treated independently in parallel and if $\uc{A}$ is a sparse matrix then the total cost is $\uc{O}(\tau+d^3)$.

To compute the max-plus inverse $\uc{M}^{\otimes -1}$ we adapt the approach taken in (\cite{hoti16}, Appendix A), where the authors present an algorithm for computing max-plus LU factors. For $\uc{M}\in\Rmax^{d\times d}$ and $\pi\in\oas(\uc{M})$, let $\uc{P}_{\pi}\in\Rmax^{d\times d}$ be the max-plus permutation matrix with 
\begin{equation}
(\uc{P}_{\pi})_{ij}=\left\{\begin{array}{cc} 0 & \hbox{if $i=\pi(j)$} \\ -\infty & \hbox{otherwise}. \end{array}\right.
\end{equation}
There exists max-plus diagonal matrices\footnote{A max-plus diagonal matrix is one whose off diagonal entries are all equal to minus infinity.} such that 
\begin{equation}
\uc{H}=\uc{P}_{\pi}\otimes \uc{D}_{1}\otimes \uc{M}\otimes \uc{D}_{2},
\end{equation}
satisfies $\lc{h}_{ij}\leq 0$ and $\lc{h}_{ii}=0$ for all $i,j=1,\dots,d$. We say that $\uc{H}$ is a \emph{Hungarian scailng} of $\uc{M}$. The coefficients of the diagonal scaling matrices are given by the dual variables in the LPP form of the optimal assignment problem. So that primal dual algorithms for computing the optimal assignment of a matrix, like those listed above, will also produce these scaling coefficients a byproduct. Ordinarily we would need to apply one of these algorithms to $\uc{M}$ with worst case cost $\uc{O}(d^3)$, but in this setting we can use the results from the previous computation applied to $\uc{A}$.

To compute the max-plus inverse of $\uc{M}$ we use the formula
\begin{equation}
\uc{M}^{\otimes -1}=\uc{D}_{1}\otimes \uc{H}^{\otimes -1}\otimes {D}_{2},
\end{equation}
where the entries in the inverse of the Hungarian matrix $\uc{H}$ can be calcualted as follows. Let $G(\uc{H})$ be the graph with vertices $\{1,\dots,d\}$ and an edge $i\mapsto j$ with weight $\lc{h}_{ij}$ whenever $\lc{h}_{ij}\neq-\infty$, then
\begin{equation}
\big(\uc{H}^{\otimes -1}\big)_{ij}=\hbox{weight of the maximally weighted path through $G(\uc{H})$ from $i$ to $j$}.
\end{equation}
Each row of $\uc{H}^{\otimes -1}$ can be computed by independently using Dijkras algorithm, with a total worst case cost of $\uc{O}(d^3)$ for a dense matrix.

The total average case cost of Algorithm~\ref{alg:MPSLS} is therefore $\uc{O}(nd+d^3)$ or $\uc{O}(\tau+d^3)$ in the sparse case.

%\begin{lemma}Let $\uc{A}\in\Rmax\nbd$ and let $\uc{T}\in\Rmax\nbd$ be the sparse matrix with 
%$$
%\lc{t}_{ij}=\left\{\begin{array}{cc} \lc{a}_{ij} & \hbox{if $\lc{a}_{ij}$ is one of the $d$ largest entries in $\uc{A}_{\cdot j}$,} \\ -\infty & \hbox{otherwise}. \end{array}\right.
%$$
%Then $\perm(\uc{T})=\perm(\uc{A})$ and $\oas(\uc{T})\subset\oas(\uc{A})$.
%\end{lemma}
%\begin{proof} 
%Suppose that $\perm(\uc{A})>\perm(\uc{T})$ and let $\phi\in\oas(\uc{A})$. Define
%$$
%w_{\uc{A}}(\phi)=\sum_{j=1}^{d}\lc{a}_{\phi(j)j}, \quad w_{\uc{T}}(\phi)=\sum_{j=1}^{d}\lc{t}_{\phi(j)j}.
%$$
%If $w_{\uc{A}}(\phi)=w_{\uc{T}}(\phi)$ then $\perm(\uc{A})=\perm(\uc{T})$, so we must have $w_{\uc{T}}(\phi)=-\infty$. 
%\end{proof}

\begin{algorithm}

\caption{ \label{alg:MPSLS}
Given a max-plus matrix $\uc{A}\in\Rmax^{n\times d}$, compute $\lc{p}(\uc{A})$. }

\medskip

\begin{algorithmic}[1]
\State compute an optimal assignment $\phi\in\oas(\uc{A})$
\State set $\uc{M}=\uc{A}([\phi(1),\dots,\phi(d)],[1,\dots,d])$
\State compute $\uc{M}^{\otimes -1}$
\State set $\lc{x}=\uc{M}^{\otimes -1}\otimes \underline{0}$
\For {$i=1,\dots,n$}
\If {$i$ assigned by $\phi$} 
\State set $\lc{p}_{i}(\uc{A})=0$
\Else  
\State set $\lc{p}_{i}(\uc{A})=(\uc{A}\otimes \lc{x})_{i}$
\EndIf
\EndFor

\end{algorithmic}

\end{algorithm}

\section*{Conclusion}

We presented a max-plus algebraic analogue for statistical leverage scores. Max-plus statistical leverage scores can be used to calculate the exact asymptotic behavior of the conventional statistical leverage  scores of generic matrices of Puiseux series and also provide a novel way to approximate the conventional statistical leverage scores of fixed complex matrices. In Section~\ref{nsection} we showed that the max-plus approximation was accurate to within an order of magnitude for a small test set of randomly generated matrices. We also demonstrated that this order of magnitude approximation was sufficiently accurate to be useful in practice, as an importance sampling distribution used inside a sampling based least squares solver. 

A major drawback of the max-plus approximation is the fact that there are certain problem matrices for which it will be very inaccurate. See Example \ref{badegg1}. However, as demonstrated in Section~\ref{maxplussection}, these `nasty' matrices where the max-plus approximation is very inaccurate form a set of small measure. Thus we expect that the max-plus approximation will be able to capture the order of magnitude of all of the statistical leverage scores of a randomly generated matrix, or a deterministic matrix which `looks like random matrix'. This should cover many object-feature matrices arising in machine learning applications.

%Example \ref{badegg1} shows that the max-plus approximation is not always accurate to within an order of magnitude. In this example certain entries in the matrix satisfy degenerate relationships. In  particular there is a $2\times 2$ submaitrx of determinant zero. Just as in the Puiseux series case these sorts of degenerate relationships cannot be detected by the max-plus analysis and this is why the approximation fails in this example. Randomly perturbing the problem matrix breaks these degenerate relationships and the max-plus approximation is able to capture the order of magnitude of the statistical leverage scores. 
%
%This means that we cannot rely on the max-plus approximation to accurately approximation the scores of an arbitrary matrix $A$. However 

\bibliographystyle{plain}

\bibliography{jlhbib}

\end{document}